\documentclass[12pt,onecolumn]{IEEEtran}
%

\usepackage[OT1]{fontenc}
\usepackage{amsthm,amsmath,amssymb}
\usepackage[numbers]{natbib}
\usepackage{paralist}
\newcommand{\subparagraph}{}
\usepackage[compact]{titlesec}
\usepackage{setspace}

\usepackage{amsthm,amsmath,natbib}
\usepackage{mathrsfs}
\usepackage{algorithm}
\usepackage{algorithmic}
\usepackage{amssymb}
\usepackage{multirow}
\usepackage[english]{babel}
\usepackage{graphicx}
\usepackage{subfig}
\usepackage{url}

\newcommand{\deliver}{{\bf deliver}}
\newcommand{\given}{{\bf given}}

\newtheorem{theorem}{Theorem}
\newtheorem{lemma}{Lemma}

\newcommand{\sgn}{{\rm sgn}}
\newcommand{\bsb}{\boldsymbol}
\newcommand{\bsbY}{{\boldsymbol{Y}}}
\newcommand{\bsbX}{{\boldsymbol{X}}}
\newcommand{\bsbx}{{\boldsymbol{x}}}
\newcommand{\bsbc}{{\boldsymbol{c}}}
\newcommand{\bsbf}{{\boldsymbol{f}}}
\newcommand{\bsbl}{{\boldsymbol{l}}}
\newcommand{\bsby}{{\boldsymbol{y}}}

\newcommand{\bsbw}{{\boldsymbol{w}}}
\newcommand{\bsbb}{{\boldsymbol{\beta}}}
\newcommand{\bsbg}{{\boldsymbol{\gamma}}}

\newcommand{\bsbLamb}{{\boldsymbol{\Lambda}}}

\newcommand{\bsbH}{{\boldsymbol{H}}}
\newcommand{\bsbG}{{\boldsymbol{G}}}
\newcommand{\bsbI}{{\boldsymbol{I}}}
\newcommand{\bsbL}{{\boldsymbol{L}}}

\newcommand{\bsbZ}{{\boldsymbol{Z}}}
\newcommand{\bsbz}{{\boldsymbol{z}}}
\newcommand{\bsbR}{{\boldsymbol{R}}}
\newcommand{\bsbQ}{{\boldsymbol{Q}}}

\newcommand{\bsbSig}{{\boldsymbol{\Sigma}}}
\newcommand{\bsbxi}{{\boldsymbol{\xi}}}
\newcommand{\bsbXi}{{\boldsymbol{\Xi}}}
\newcommand{\bsbD}{{\boldsymbol{D}}}
\newcommand{\bsbW}{{\boldsymbol{W}}}

\newcommand{\bsbB}{{\boldsymbol{B}}}
\newcommand{\bsbP}{{\boldsymbol{P}}}
\newcommand{\bsbU}{{\boldsymbol{U}}}

\newcommand{\bsbK}{{\boldsymbol{K}}}
\newcommand{\bsba}{{\boldsymbol{\alpha}}}
\newcommand{\bsbA}{{\boldsymbol{A}}}

\newcommand{\bsbC}{{\boldsymbol{C}}}

\newcommand{\bsbpi}{{\boldsymbol{\pi}}}

\newcommand{\bsbu}{{\boldsymbol{u}}}

\newcommand{\bsbmu}{{\boldsymbol{\mu}}}
\newcommand{\rd}{\,\mathrm{d}}
\newcommand{\tran}{{\mathsf{T}}}

\newcommand{\sss}{\textbf{S}$^3$}

\begin{document}

\title{Learning  Topology and Dynamics of Large Recurrent Neural Networks}
\author{Yiyuan She, Yuejia He,  and Dapeng Wu \thanks{Yiyuan She is with Department of Statistics, Florida State University, Tallahassee, FL 32306. Yuejia He and Dapeng Wu are with Department of Electrical and Computer Engineering,   University of Florida, Gainesville, FL 32611.
                  }}

\maketitle

\begin{abstract}
Large-scale recurrent networks have drawn increasing attention recently  because of  their capabilities in  modeling  a large variety of real-world phenomena and physical mechanisms.
This paper studies  how to   identify all authentic connections  and estimate  system parameters of a  recurrent network, given a sequence of   node observations. This task becomes extremely challenging in modern   network applications,  because  the available observations  are usually very noisy and limited, and  the associated dynamical system is strongly nonlinear. By  formulating the problem as multivariate sparse sigmoidal regression, we   develop simple-to-implement  network learning algorithms,  with rigorous  convergence guarantee in theory, for a variety of sparsity-promoting penalty forms. A quantile variant of  progressive recurrent network screening is proposed for efficient computation and  allows for  direct cardinality control  of network topology in estimation.  Moreover, we investigate recurrent network stability conditions  in {Lyapunov}'s sense, and integrate such stability constraints into  sparse network learning. Experiments show excellent performance of the proposed algorithms in network topology identification and forecasting.
\end{abstract}

\begin{keywords}
Recurrent networks, topology learning, shrinkage estimation, variable selection, dynamical systems, Lyapunov  stability.
\end{keywords}

\section{Introduction}
\label{sec:intro}

There has been an increasing interest in identifying network
dynamics and topologies in the emerging scientific discipline of
network science. In a dynamical network,
the evolution of a node is controlled not only by itself, but also
by other nodes. For example, in  gene regulatory networks
\citep{Faith07}, the expression levels of genes influence each
other, following some dynamic rules, such that the genes
are connected together to form a dynamical system. If the topology and evolution
rules of the network are known, we can analyze the regulation
between genes or detect unusual behaviors to help diagnose and cure
genetic diseases. Similarly, the modeling and estimation of
dynamical networks are of great importance in various domains
including stock market, brain network
 and social network \citep{mills2008econometric,Bullmore09,Hanneke10}. To
accurately identify the topology and dynamics underlying those
networks, scientists are devoted to developing appropriate mathematical
models and corresponding estimation methods.

 In the literature, linear dynamical models are    commonly used. For example, the human brain connectivity network \cite{roebroeck2005mapping} can be  characterized by a set of linear differential equations, where  the  rate of change of activation/observation  of any  node is  a weighted sum of  the activations/observations of its neighbors:
$
{\rd x_i}/{\rd t} = \sum_{j\neq i} \alpha_{ij} x_j - d_{i} x_i , 1\leq i\leq n.
$
Here $\alpha_{ij}$ provide the connection weights and $d_i$ is the decay rate.
Nevertheless, a lot of complex dynamical networks clearly demonstrate   \textit{nonlinear} relationships between the nodes. For instance,    the strength of influence is unbounded in the previous simple linear combination, but the so-called   ``\textbf{saturation}'' effect widely exists in physical/biological systems (neurons, genes, and stocks)---the external influence  on a node, no matter how strong the total input activation is,  cannot go beyond  a certain threshold.  To capture the mechanism,  nonlinearity must be introduced into the network system:
$
{\rd x_i}/{\rd t} = l_{i} \pi(\sum_{j\neq i} \alpha_{ij} x_j+u_i) - d_{i} x_i + c_i$,
where $\pi$ denotes  a nonlinear activation function  typically taken to be  the sigmoidal  function
$
\pi(\theta) =  {1}/({1+e^{-\theta}})
$. It has a proper shape to resemble   many real-world mechanisms and behaviors.

The model description  is associated with  a   \textit{continuous-time recurrent neural network}.
 The existing feedback loops allow the network to exhibit interesting dynamic temporal behaviors  to capture many kinds of relationships. It is also biologically realistic  in   say modeling the effect of an input spike train. Recurrent networks have been  successfully applied to a wide range of problems in bioinformatics,  financial market forecast, electric circuits,  computer vision, and robotics; see, e.g., \cite{beer1997dynamics,gallacher2000continuous,robotics,xu2007inference,Vu2007} among many others. %

In practical applications, it is often necessary to include      noise contamination:
$
\rd x_i = (l_i \pi( \sum_{j\neq i} \alpha_{ij}  x_j + u_i) - d_i x_i + c_i)\rd t + \sigma \rd \mathcal{B}_t,    1\leq i\leq n,
$
where  $\mathcal{B}_t$ stands for an $n$-dimensional Brownian motion.
Among the very many unknown parameters,  $\alpha_{ij}$ might be the most important: the zero-nonzero pattern of $\alpha_{ij}$ indicates if  there exists a (direct) connection from node $j$ to node $i$.  Collecting all such connections results in a {directed} graph to describe the node  interaction structure.

A fundamental question naturally arises: \textit{Given a sequence of  node observations   (possibly at very few time points), can one  identify all existing connections  and estimate all system parameters of a  recurrent network?}

This task becomes extremely challenging in modern big  network applications,  because  the available observations  are usually very noisy and only available at a relatively small number of time points (say $T$),  due to budget or equipment limitations.  One frequently faces  applications with $n^2$ much larger than  $T$.  In addition, in this continuous time setting,  no analytical formula of the likelihood exists for the stochastic model, which increases the estimation difficulty even in large samples  \cite{kou2012multiresolution}.
Instead of considering  multi-step {ad-hoc} procedures,  this paper  aims at learning the network system \textit{as a whole}. Multivariate statistical techniques will be developed for identifying  complete  topology and recovering  all dynamical parameters. To the best of our knowledge, automatic topology and dynamics learning in large-scale recurrent networks has not been studied before.

In this work, we are interested in  networks that are \textbf{sparse} in topology.  First, many real-world complex dynamical networks indeed have sparse or approximately sparse structures. For example, in  regulatory networks, a gene is only regulated by  a handful of others ~\cite{Fujita07}.  Second, when the number of nodes is large or very large  compared with the number of observations, the sparsity assumption reduces the number of model parameters so that the system is estimable. Third, from a philosophical point of view, a sparse network modeling is consistent with the principle of Occam's razor.

Not surprisingly, there is a surge of interest of using compressive sensing  techniques for parsimonious network topology learning and dynamics prediction. However, relying on  sparsity alone seems to have only limited power in  addressing  the difficulties of large-scale network learning from  big data.
To add more prior knowledge and  to  further reduce the number of effective unknowns, we propose to study  how to {incorporate     structural properties of the  network system into  learning and estimation, in addition to sparsity}.
In fact, real-life networks of interest usually demonstrate 
\textbf{asymptotic stability}.
This is  one of the main reasons why practitioners only perform {limited}  number of measurements of the system, which again provides a  type of parsimony or shrinkage in network learning.

In this paper we  develop sparse sigmoidal network learning algorithms,  with rigorous  convergence guarantee in theory, for a variety of sparsity-promoting penalty forms. A quantile variant,    the progressive recurrent network screening, is proposed for efficient computation and  allows for  direct cardinality control  of network topology in estimation.  Moreover, we investigate recurrent network stability conditions  in \textit{Lyapunov}'s sense, and incorporate such stability constraints into  sparse network learning.
The remaining of this paper is organized as follows.  Section~\ref{sec:model} introduces the sigmoidal recurrent network model,  and formulates a multivariate regularized problem based on the discrete-time approximate likelihood. Section~\ref{sec:algorithm} proposes a class of sparse network learning algorithms based on the study of sparse sigmoidal regressions. A novel and efficient  recurrent network screening (RNS) with theoretical   guarantee  of convergence is advocated  for topology identification in ultra-high dimensions. Section~\ref{sec:stability} investigates  asymptotic stability conditions in recurrent systems,  resulting in  a  stable-sparse sigmoidal ({\sss}) network learning. In Section~\ref{sec:dataexps}, synthetic data experiments and  real applications are given.
All proof details are left to the Appendices.
\section{Model Formulation}
\label{sec:model}
To describe the evolving state of a  continuous-time recurrent neural network,   one usually defines an associated    dynamical system.  Ideally, without any randomness, the network behavior can be specified by a set of ordinary differential equations:
\begin{align}
\frac{\rd x_i}{\rd t} = l_i \pi( \sum_{j\neq i} \alpha_{ij}  x_j + u_i) - d_i x_i + c_i, \quad i=1,\cdots,n,  \label{sigode0}
\end{align}
where $x_i$, short for $x_i(t)$, denotes the dynamic process of node $i$. Throughout the paper,  $\pi$ is the \textit{sigmoidal} activation function
\begin{align}
\pi(\theta) =  \frac{1}{1+e^{-\theta}},
\end{align}
which is   most frequently used in recurrent networks. This  function  is smooth and strictly increasing. It has a proper shape to resemble   many real-world mechanisms and behaviors.
Due to noise contamination
, a stochastic differential equation  model is  more realistic
\begin{align}
\rd x_i = (l_i \pi( \sum_{j\neq i} \alpha_{ij}  x_j + u_i) - d_i x_i + c_i)\rd t + \sigma \rd \mathcal{B}_t,  \label{eq:sigsde_univariate}
\end{align}
where $\mathcal{B}_t$ is a standard Brownian motion and reflects the stochastic nature of the system.
Typically   $l_i>0, d_i>0$, and in some applications  $\alpha_{ii}=0$ (no self-regulation) is required.


In the sigmoidal recurrent network model, the coefficients $\alpha_{ij}$ characterize    the between-node interactions.  In particular, $\alpha_{ij}=0$ indicates that node $j$ does not \textit{directly} influence  node $i$; otherwise, node $j$ regulates node $i$, and is referred to as  a regulator of node $i$ in gene regulatory networks. Such a regulation relationship can be either excitatory (if $\alpha_{ij}$ is positive) or inhibitory (if $\alpha_{ij}$ is negative). In this way, $\bsbA=[\alpha_{ij}]$ is associated  with a directed graph that captures the Granger causal relationships between all nodes \cite{Granger69}, and the topology of the recurrent network is revealed by the zero-nonzero pattern of  the matrix.
Therefore, to identify  all significant regulation links, it is of great interest  to estimate $\bsbA$,   given a sequence of  node observations (snapshots of the network).

Denote the system state at time $t$ by $\bsbx(t)=[ x_1(t) \cdots  x_n(t)]^\tran$ or   $\bsbx_t$ (or simply  $\bsbx$  when there is no ambiguity.)  Define $\bsbl = [ l_1,    \cdots l_n ]^\tran$,
$\bsbu = [ u_1, \cdots, u_n ]^\tran$, $\bsbc = [  c_1, \cdots, c_n ]^\tran$, $\bsbD = \mbox{diag}\{d_i\}$,  $\bsbL = \mbox{diag}\{l_i\}$, and $\bsbA = [\alpha_{ij}] =[\bsba_1 \cdots  \bsba_n]^{\tran} \in \mathbb R^{n\times n}$. Then \eqref{eq:sigsde_univariate} can be  represented in a \textit{multivariate}  form
\begin{align}
 \rd \bsbx_t = \left(\bsbL \bsbpi( \bsbA  \bsbx + \bsbu) - \bsbD \bsbx + \bsbc\right) \rd t + \sigma \rd \bsb{\mathcal{B}}_t,    \label{eq:sigsde}
\end{align}
where $\bsb{\mathcal B_t}$ is an $n$-dimensional standard Brownian motion.
While this model is specified in continuous time,
in practice, the observations are always collected at discrete time points. Estimating the parameters of  an SDE model from  few discrete observations is very challenging. There rarely exists an analytical expression of the exact likelihood. A common treatment  is to discretize  ~\eqref{eq:sigsde} and use an approximate likelihood instead. We use the Euler discretization scheme (see, e.g., \cite{bally1996law}):
\begin{equation}
\Delta \bsb{x} = (\bsb{L}\bsb{\pi}(\bsb{A}\bsb{x}+\bsb{u})
-\bsb{D}\bsb{x}+\bsb{c})\Delta t + \sigma \Delta \bsb{\mathcal{B}}_t.
\label{eq:sigmoid_discret} \nonumber
\end{equation}

Suppose the system \eqref{eq:sigsde} is observed at $T+1$ time points $t_1, \cdots, t_{T+1}$.
Let
$\bsbx_s =[ x_1(t_s),  \cdots,  x_n(t_s) ]^{\tran}  \in \mathbb R^n
$
 be the observed values of all $n$ nodes at $t_s$.
Define $\Delta  \bsbx_s = (\bsbx_{s+1} -  \bsbx_s)$ and $\Delta t_s = (t_{s+1} - t_s)$, $1\leq s \leq T$.  Because $\Delta\bsb{\mathcal{B}_t}\sim\mathcal{N}(\bsb{0},\Delta t\bsbI)$, the negative {conditional} log-likelihood for the {discretized} model is given by
\begin{align*}
&\ell(\bsbx_1, \cdots, \bsbx_{T+1} |  \bsbx_1) = - \log   P( \Delta\bsbx_1, \cdots, \Delta  \bsbx_{T} |   \bsbx_1)\\
=& - \log   P(\Delta   \bsbx_1 |   \bsbx_1) \cdots P(\Delta   \bsbx_{T} |   \bsbx_{T} )\\
=&  \sum_{s=1}^{T} \|{\Delta   \bsbx_s}/{\Delta t_s} - (\bsbL \bsbpi( \bsbA    \bsbx_s + \bsbu) - \bsbD   \bsbx_s + \bsbc)\|_2^2 \Delta t_s/(2\sigma^2) + C(\sigma^2) \\
=: &f(\bsbA, \bsbu, \bsbl, \bsb{d}, \bsbc)/\sigma^2 + C(\sigma^2).
\end{align*}
$C(\sigma^2)$ is a function that depends on $\sigma^2$ only.
The fitting criterion   $f$ is separable in $\bsba_1, \ldots, \bsba_n$. To see this, let $x_{i, s} = x_i(t_{s})$ and $\Delta x_{i, s} = x_i(t_{s+1}) - x_i(t_{s})$. Then, it is easy to verify that
\begin{align}
 \begin{split}
 f(\bsbA, \bsbu, \bsbl, \bsb{d}, \bsbc)  =  \frac{1}{2 } \sum_{i=1}^n \sum_{s=1}^{T} \left(\frac{\Delta x_{i,s}}{\Delta t_s} - (l_i  \pi(  \bsba_i^{\tran} \bsbx_s  + u_i) - d_i  x_{i,s} + c_i)\right)^2 \Delta t_s. \label{loss-sumform}
\end{split}\end{align}
Conventionally, the unknown parameters can then be estimated by minimizing $f$. In modern applications, however, the number of available observations ($T$) is often much smaller than the number of variables to be estimated ($n^2+4n$), due to, for example, equipment/budget limitations. Classical  MLE methods  do not apply well in this high-dimensional setting.

Fortunately,  the networks of interest in reality often possess  topology sparsity. For example, a stock price may not be directly influenced by all the other stocks in the stock market. A parsimonious network with only  significant regulation links is much more interpretable. Statistically speaking, the sparsity in $\bsbA$ suggests the necessity of   \textbf{shrinkage estimation} \cite{JamesSteinEstimator} which can be  done by  adding penalties and/or constraints to the loss function. The general penalized maximum likelihood problem is
\begin{align}
\label{eq:pml}
 \min_{\bsbA, \bsbu, \bsbl, \bsb{d}, \bsbc} & f(\bsbA, \bsbu, \bsbl, \bsb{d}, \bsbc) + \sum_{i, j} P(\alpha_{ij}; \lambda_{ji})
\end{align}
where $P$ is a penalty  promoting sparsity and $\lambda_{j i}$ are regularization parameters.
Among the very many possible choices of $P$,  the $\ell_1$ penalty is perhaps the most popular to enforce sparsity:
\begin{align}
P(t; \lambda) = \lambda |t|.  \label{eq:l1-pen}
\end{align}
It provides a convex relaxation of the  $\ell_0$ penalty
\begin{align}
P(t; \lambda) = \frac{\lambda^2}{2} 1_{t\neq 0}.   \label{eq:l0-pen}
\end{align}
Taking both  topology identification and dynamics prediction into consideration,  we are particularly interested in the $\ell_0+\ell_2$ penalty \cite{SheGLMTISP}
\begin{align}
P(t; \lambda, \eta)   = \frac{1}{2}\frac{\lambda^2}{1+\eta} 1_{t\neq 0} + \frac{\eta}{2}t^2,  \label{eq:l02-pen}
\end{align}
where the $\ell_2$ penalty or Tikhonov regularization can effectively deal with large noise and collinearity \cite{zou2005regularization,Park07} to  enhance estimation accuracy.

The shrinkage estimation problem  \eqref{eq:pml} is  however nontrivial. The loss $f$ is  nonconvex, $\pi$ is nonlinear, and the penalty $P$ may be nonconvex or even discrete, let alone the high-dimensionality challenge. Indeed,  in many practical networks, the available observations are usually quite limited and noisy. Effective and efficient learning algorithms are in great need to meet the modern  big data challenge.

\section{Sparse Sigmoidal Regression for Recurrent Network Learning}
\label{sec:algorithm}
\subsection{Univariate-response sigmoidal regression }
\label{sec:univariate}
As analyzed  previously, to solve \eqref{eq:pml}, it is sufficient to study a univariate-response learning problem
\begin{align}
\begin{split}
\min_{(\bsbb, l, \bsbg)} &\frac{1}{2}\sum_{s=1}^T  w_s \left(y_s - l  \pi(\tilde\bsbx_s^{\tran}\bsbb)  - \bsbz_s^{\tran} \bsbg \right)^2 + \sum_{k=1}^n P(\beta_k, \lambda_k)=: F(\bsbb, l, \bsbg)   \label{eq:sigmoidopt}
\end{split}
\end{align}
where  $\tilde\bsbx_s, \bsbz_s, y_s, w_s$ are given, and $l, \bsbg, \bsbb$ are unknown with $\bsbb$  desired to be sparse.
\eqref{eq:sigmoidopt} is  the recurrent network learning problem for node $i$  when
we set $\tilde{\bsbx}_s=[1, \bsbx_s]^\tran$,
$y_s= \Delta  x_{i,s} /\Delta t_s$, $\bsbz_s=[1,x_{i,s}]^\tran$,  and $w_s = \Delta t_s$ ($1\leq s \leq T$) (note that the intercept $\beta_1$ is usually subject to no penalization corresponding to  $\lambda_1=0$). For notational ease, define $\tilde\bsbX=[\tilde\bsbx_1,\cdots, \tilde\bsbx_T]^\tran$, $\bsby=[ y_1, \cdots,  y_T ]^\tran$,  $\bsbZ=[\bsbz_1,  \cdots,  \bsbz_T]^\tran$,  $\bsb{\lambda}= [\lambda_k] \in \mathbb R^{n}$, and   $\bsbW =\mbox{diag}\{\bsbw\}=\mbox{diag}\{w_1, \cdots, w_s\}$ (to be used in this subsection only, unless otherwise specified).


We propose a simple-to-implement and efficient  algorithm to solve the general optimization problem in the form of \eqref{eq:sigmoidopt}.
First define two useful auxiliary functions
\begin{eqnarray}
\xi(\theta, y) &=& \pi(\theta) (1-\pi(\theta)) (\pi(\theta) - y), \label{xidef}\\
k_0(\bsby, \bsbw) &=& \max_{1\leq s \leq T}  \frac{{w_s}}{16} \left(1+ \frac{ (1-2y_s)^2}{2}\right).\label{k0def}
\end{eqnarray}
The vector versions of $\pi$ and $\xi$ are defined componentwise: $\bsbpi(\bsb{\theta})=[\pi(\theta_1), \cdots, \pi(\theta_T)]^{\tran}$ and
$\bsbxi(\bsb{\theta}, \bsb{y})=[\xi(\theta_1,y_1), \cdots, \xi(\theta_T, y_T)]^{\tran}$, and the matrix versions are defined similarly.
A prototype algorithm is described as follows, starting with an initial estimate $\bsbb^{(0)}$,   a thresholding rule $\Theta$ (an odd,
shrinking and nondecreasing function, cf. \cite{SheIPOD}), and $j=0$.

\begin{algorithmic}
\REPEAT
\STATE \textit{0) } $j\gets j+1$
\STATE \textit{1) } $\bsbmu^{(j)} \gets \bsbpi(\tilde\bsbX\bsbb^{(j-1)})$
\STATE \textit{2) } Fit a  weighted  least-squares   model
\begin{equation}
\min_{l,\bsbg} \| \bsbW^{1/2} (\bsby- [\bsbmu^{(j)} \ \bsbZ ] [ l \  \bsbg^\tran]^\tran )\|_2^2, 
\end{equation}
with the corresponding solution  denoted by $(l^{(j)}, \bsbg^{(j)})$.
\STATE \textit{3) } Construct $\tilde\bsby^{(j)} \gets (\bsby- \bsbZ \bsbg^{(j)})/ l^{(j)}$,  $\tilde\bsbw^{(j)} \gets  (l^{(j)})^2 \bsbw$,  $\tilde\bsbW^{(j)} = \mbox{diag}\{\tilde\bsbw^{(j)}\}$. Let   $K^{(j)}$ be any constant no less than $k_0(\tilde \bsby^{(j)}, \tilde \bsbw^{(j)})  \|\tilde \bsbX\|_2^2$, where $\|\cdot\|_2$ is the spectral norm.
\STATE \textit{4) } Update  $\bsbb$ via  thresholding:
\begin{equation}
\label{eq:uni_th}
\bsbb^{(j)} =   \Theta ( \bsbb^{(j-1)} - \frac{1}{K^{(j)}} \tilde\bsbX^{\tran} \tilde\bsbW^{(j)} \bsbxi(\tilde\bsbX\bsbb^{(j-1)}, \tilde \bsby^{(j)}); \bsb{\lambda}^{(j)} ),
\end{equation}
where $\bsb{\lambda}^{(j)}= [\lambda_k^{(j)}]_{n\times 1}$ is a scaled version of    $\bsb{\lambda}$ satisfying $P(t; \lambda_{k})/ K^{(j)} = P(t; \lambda_k^{(j)})$ for any $t\in \mathbb R$ , $1\leq k \leq n$.
\UNTIL{convergence}
\end{algorithmic}

 Before proceeding, we give some examples of $\Theta$ and $\bsb{\lambda}^{(j)}$. The specific form of the thresholding function $\Theta$ in \eqref{eq:uni_th} is related to  the penalty $P$  through the following formula \cite{SheGLMTISP}:
\begin{equation}
\begin{split}
&P(t;\lambda)-P(0;\lambda) \\
=& \int_0^{|t|}( \mbox{sup}\{s: \Theta(s;\lambda)\leq u\}-u )\rd u
+ q(t;\lambda)
\end{split} \label{eq:construction}
\end{equation}
with $q(\cdot;\lambda)$ nonnegative and $q(\Theta(s;\lambda))=0$ for all $s$.
 The regularization parameter(s) are rescaled at each iteration according to the form of $P$.
Examples include:   (i) the  $\ell_1$ penalty \eqref{eq:l1-pen}, and its associated \textit{soft-thresholding}
$
\Theta_S(t; \lambda) = \sgn(t)(|t|-\lambda)1_{|t|> \lambda},
$
in which case  $P(t; \lambda)/ K^{(j)} = P(t; \lambda^{(j)}), \forall t$ implies $\lambda^{(j)}=\lambda/K^{(j)}$,   (ii) the $\ell_0$ penalty \eqref{eq:l0-pen} and  the \textit{hard-thresholding}
$
\Theta_H(t)= t 1_{|t|> \lambda},
$ which determines  $\lambda^{(j)}=\lambda/\sqrt{K^{(j)}}$,
and  (iii) the $\ell_0+\ell_2$ penalty \eqref{eq:l02-pen} and its associated
 \textit{hard-ridge thresholding}
$
\Theta_{HR}(t; \lambda,\eta)=\frac{t}{1+\eta}1_{|t|> \lambda},
$
where $\lambda^{(j)}=\frac{\lambda}{K^{(j)}}\sqrt{{(\eta+K^{(j))}}/{(\eta+1)}}, \eta^{(j)}=\eta/K^{(j)}$.
The $\ell_p$ ($0<p<1$) penalties,   the elastic net penalty, and others \cite{SheGLMTISP}   are also instances of this framework.
\begin{theorem}
\label{th:fValDecrease}
Given the objective function in \eqref{eq:sigmoidopt}, suppose $\Theta$ and  $P$ satisfy \eqref{eq:construction}  for some  nonnegative $q(t;\lambda)$ with $q(\Theta(t;\lambda))=0$ for all $t$ and $\lambda$.
Then given any initial point $\bsbb^{(0)}$,  with probability 1 the sequence of iterates $(\bsbb^{(j)}, l^{(j)}, \bsbg^{(j)})$ generated by the prototype algorithm satisfies
\begin{align}
F(\bsbb^{(j-1)}, l^{(j-1)}, \bsbg^{(j-1)}) \geq F(\bsbb^{(j)},l^{(j)},  \bsbg^{(j)}). \label{funcvaldes}
\end{align}
\end{theorem}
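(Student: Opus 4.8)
The plan is to prove the monotone descent \eqref{funcvaldes} by viewing one pass of the prototype algorithm as a two-block procedure and showing that neither block increases $F$. Write $F(\bsbb,l,\bsbg)=h_{l,\bsbg}(\bsbb)+\sum_{k}P(\beta_k,\lambda_k)$, where $h_{l,\bsbg}(\bsbb)=\frac12\sum_{s}w_s\big(y_s-l\,\pi(\tilde\bsbx_s^{\tran}\bsbb)-\bsbz_s^{\tran}\bsbg\big)^2$ is the smooth fidelity term. Step~2 of the algorithm fixes $\bsbb=\bsbb^{(j-1)}$ (hence freezes $\bsbmu^{(j)}=\bsbpi(\tilde\bsbX\bsbb^{(j-1)})$) and \emph{exactly} minimizes the weighted least-squares objective over $(l,\bsbg)$; since the penalty is free of $(l,\bsbg)$, this immediately gives $F(\bsbb^{(j-1)},l^{(j)},\bsbg^{(j)})\le F(\bsbb^{(j-1)},l^{(j-1)},\bsbg^{(j-1)})$. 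It then remains to show $F(\bsbb^{(j)},l^{(j)},\bsbg^{(j)})\le F(\bsbb^{(j-1)},l^{(j)},\bsbg^{(j)})$ for the thresholding update \eqref{eq:uni_th}, with $(l^{(j)},\bsbg^{(j)})$ now held fixed.

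For the $\bsbb$-block I would first note that $l^{(j)}\ne0$ with probability one --- conditionally on the past, the data vectors making some least-squares subproblem degenerate or some leading coefficient vanish form a Lebesgue-null set, so the absolutely continuous observations avoid it a.s. (this is where the ``with probability $1$'' qualifier, and the well-posedness of steps~2--3, enter). Then, with $\tilde\bsby^{(j)}=(\bsby-\bsbZ\bsbg^{(j)})/l^{(j)}$ and $\tilde\bsbw^{(j)}=(l^{(j)})^2\bsbw$ as constructed, one has $h_{l^{(j)},\bsbg^{(j)}}(\bsbb)=\frac12\sum_{s}\tilde w_s^{(j)}\big(\tilde y_s^{(j)}-\pi(\tilde\bsbx_s^{\tran}\bsbb)\big)^2=:h(\bsbb)$. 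Differentiating and using $\pi'=\pi(1-\pi)$ with \eqref{xidef} gives $\nabla h(\bsbb)=\tilde\bsbX^{\tran}\tilde\bsbW^{(j)}\bsbxi(\tilde\bsbX\bsbb,\tilde\bsby^{(j)})$, so \eqref{eq:uni_th} is precisely the proximal-gradient step $\bsbb^{(j)}=\Theta\big(\bsbb^{(j-1)}-\frac1{K^{(j)}}\nabla h(\bsbb^{(j-1)});\bsb{\lambda}^{(j)}\big)$ with step length $1/K^{(j)}$.

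The crux is a global quadratic majorization of $h$. I would bound $\nabla^2 h(\bsbb)=\sum_{s}\tilde w_s^{(j)}\,\partial_\theta\xi\big(\tilde\bsbx_s^{\tran}\bsbb,\tilde y_s^{(j)}\big)\,\tilde\bsbx_s\tilde\bsbx_s^{\tran}$ by controlling the scalar derivative $\partial_\theta\xi(\theta,y)$ uniformly in $\theta\in\mathbb R$: substituting $p=\pi(\theta)\in(0,1)$ turns $\partial_\theta\xi$ into a low-degree polynomial in $(p,y)$, and maximizing over $p$ --- the one genuinely delicate (but elementary) calculation --- gives $\sup_{\theta}|\partial_\theta\xi(\theta,y)|\le\frac1{16}\big(1+\frac{(1-2y)^2}{2}\big)$. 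Hence $\tilde w_s^{(j)}\,|\partial_\theta\xi(\cdot,\tilde y_s^{(j)})|\le k_0(\tilde\bsby^{(j)},\tilde\bsbw^{(j)})$ by \eqref{k0def}, and since $\tilde w_s^{(j)}\ge0$ we get $\nabla^2 h(\bsbb)\preceq k_0(\tilde\bsby^{(j)},\tilde\bsbw^{(j)})\,\|\tilde\bsbX\|_2^2\,\bsbI\preceq K^{(j)}\bsbI$ for every $\bsbb$, by the choice of $K^{(j)}$ in step~3. The standard descent lemma then yields, for all $\bsbb$, $h(\bsbb)\le g_j(\bsbb):=h(\bsbb^{(j-1)})+\langle\nabla h(\bsbb^{(j-1)}),\bsbb-\bsbb^{(j-1)}\rangle+\frac{K^{(j)}}{2}\|\bsbb-\bsbb^{(j-1)}\|_2^2$, with equality at $\bsbb^{(j-1)}$.

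Finally I would close via the minimization step of the majorization--minimization scheme. Completing the square, $g_j(\bsbb)=\frac{K^{(j)}}{2}\|\bsbb-\bsbr^{(j)}\|_2^2+\text{const}$ with $\bsbr^{(j)}=\bsbb^{(j-1)}-\frac1{K^{(j)}}\nabla h(\bsbb^{(j-1)})$, so $g_j(\bsbb)+\sum_kP(\beta_k,\lambda_k)$ separates coordinatewise; dividing the $k$-th piece by $K^{(j)}$ and using the rescaling $P(t;\lambda_k)/K^{(j)}=P(t;\lambda_k^{(j)})$ reduces it to $\min_{\beta}\frac12(\beta-r_k^{(j)})^2+P(\beta;\lambda_k^{(j)})$. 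By the correspondence \eqref{eq:construction} between $\Theta$ and $P$ with $q\ge0$ and $q(\Theta(\cdot))=0$ (cf.\ \cite{SheGLMTISP,SheIPOD}), the thresholded value $\beta_k^{(j)}=\Theta(r_k^{(j)};\lambda_k^{(j)})$ satisfies $\frac12(\beta_k^{(j)}-r_k^{(j)})^2+P(\beta_k^{(j)};\lambda_k^{(j)})\le\frac12(\beta-r_k^{(j)})^2+P(\beta;\lambda_k^{(j)})$ for every $\beta$; taking $\beta=\beta_k^{(j-1)}$, multiplying by $K^{(j)}$, and summing over $k$ gives $g_j(\bsbb^{(j)})+\sum_kP(\beta_k^{(j)},\lambda_k)\le g_j(\bsbb^{(j-1)})+\sum_kP(\beta_k^{(j-1)},\lambda_k)$. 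Combining with $h(\bsbb^{(j)})\le g_j(\bsbb^{(j)})$ and $g_j(\bsbb^{(j-1)})=h(\bsbb^{(j-1)})$ yields $F(\bsbb^{(j)},l^{(j)},\bsbg^{(j)})\le F(\bsbb^{(j-1)},l^{(j)},\bsbg^{(j)})$, and chaining with the $(l,\bsbg)$-block gives \eqref{funcvaldes}. The principal obstacle is the uniform derivative bound on $\xi$ that produces the constant $k_0$ (and thus validates the step size $1/K^{(j)}$); a lesser one is the almost-sure argument that $l^{(j)}\ne0$ throughout.
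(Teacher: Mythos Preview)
Your proposal is correct and follows essentially the same route as the paper's proof: a two-block descent where the $(l,\bsbg)$-step is exact weighted least squares and the $\bsbb$-step is handled by a majorization--minimization argument, with the quadratic surrogate coming from a uniform Hessian bound that yields precisely the constant $k_0$ in~\eqref{k0def}, and the minimizer of the surrogate identified via the $\Theta$--$P$ correspondence~\eqref{eq:construction} (Lemma~1 in~\cite{SheGLMTISP}). The paper packages the surrogate slightly differently (defining $G(\bsbb,\bsbb')$ directly rather than invoking the descent lemma by name) and obtains the key bound on $\partial_\theta\xi$ via an AM--GM step $(1-2\mu)(2\mu-2y)\le\big(\tfrac{1-2y}{2}\big)^2$ combined with $\mu(1-\mu)\le\tfrac14$, but the content is identical; your ``probability~1'' remark about $l^{(j)}\ne0$ also matches the paper's brief parenthetical.
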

See Appendix~\ref{app:fValDecrease} for the  proof.

Normalization is usually necessary to make all predictors equally span in the space  before  penalizing their coefficients using a single regularization parameter $\lambda$. We can center and scale all predictor columns but the intercept before  calling the algorithm. Alternatively, it is sometimes more convenient to specify $\lambda_k$ componentwise---e.g., in  the $\ell_1$ penalty $\sum \lambda_k |\beta_k|$ set $\lambda_k=\lambda \cdot \|\bsbX[:, k]\|_2$ for non-intercept coefficients.

\subsection{Cardinality constrained  sigmoidal network learning }
In the recurrent network setting,  one can directly apply the prototype algorithm  in Section~\ref{sec:univariate}  to solve~\eqref{eq:pml}, by updating the columns in $\bsbB$ one at a time. On the other hand, a {multivariate} update form is usually more efficient and convenient in implementation. Moreover, it    facilitates   integrating stability  into network learning (cf. Section~\ref{sec:stability}).

To formulate the loss in a multivariate form, we introduce
\begin{align*}
\bsbY &= [y_{i,s}]=[\Delta x_{i,s}/\Delta t_s]\in \mathbb R^{T\times n},
\\\bsbX &=[\bsbx_1,\cdots, \bsbx_T]^\tran\in \mathbb R^{T\times n},
\\
\bsbB &=\bsbA^\tran= [\bsba_1, \cdots, \bsba_n]\in \mathbb R^{n\times n}, \\ \bsbW &=\mbox{diag}\{\bsbw\}= \mbox{diag}\{ \Delta t_1, \cdots, \Delta t_T\}.
\end{align*} Then  $f$ in \eqref{loss-sumform} can be rewritten as
\begin{align}
\label{eq:negloglik}
 f(\bsbB, \bsbu, \bsbl, \bsb{d}, \bsbc)  =  \frac{1}{2} \| \bsbW^{1/2} \left\{ \bsbY - [\bsbpi(\bsbX \bsbB + \bsb{1} \bsbu^\tran) \bsbL - \bsbX \bsbD + \bsb{1} \bsbc^{\tran} ]\right\} \|_F^2,
\end{align}
with $\| \cdot \|_F$ denoting the Frobenius norm,
and the objective  function to  minimize becomes
$
 \| \bsbW^{1/2} \{ \bsbY - [\bsb{\pi}(\bsbX \bsbB + \bsb{1} \bsbu^\tran) \bsbL - \bsbX \bsbD + \bsb{1} \bsbc^{\tran} ]\} \|_F^2 + P(\bsbB,\bsbLamb).
$

One of the main issues is to choose a proper penalty form for sparse network learning.
Popular sparsity-promoting penalties include  $\ell_1$, SCAD, $\ell_0$, among others. The  $\ell_0$ penalty \eqref{eq:l0-pen} is  ideal in  pursuing a parsimonious solution. However,   the matter of parameter tuning  cannot be ignored.   Most tuning strategies (such as  $K$-fold cross-validation)  require computing a solution path  for a grid of values of $\lambda$, which is  quite time-consuming  in large network estimation. Rather than applying   the  $\ell_0$ penalty, we propose  an  $\ell_0$ \textbf{constrained}
 sparse network learning \begin{equation}
\label{eq:l0_constraint}
\|\bsbB\|_0 \leq m,
\end{equation}
where $\|\cdot\|_0$ denotes the number of nonzero components in a matrix. In contrast to the penalty parameter $\lambda$ in  \eqref{eq:l0-pen}, $m$ is more meaningful and customizable. One can  directly specify its value based on prior knowledge or availability of computational resources to have  control of the network connection cardinality. To account for  collinearity and large noise contamination, we   add a further  $\ell_2$ penalty in $\bsbB$, resulting in a  new `$\ell_0+\ell_2$' regularized criterion
\begin{equation}
\begin{gathered}
\min_{\bsbB,\bsbL,\bsbD, \bsbu, \bsbc} \frac{1}{2} \| \bsbW^{1/2} \{ \bsbY - [\bsb{\pi}(\bsbX  \bsbB + \bsb{1} \bsbu^{\tran}) \bsbL - \bsbX \bsbD + \bsb{1} \bsbc^{\tran} ]\} \|_F^2\\ + \frac{\eta}{2}\|\bsbB\|_F^2=:F_0,  \mbox{ s.t. } \|\bsbB\|_0  \leq m.
\end{gathered}
\label{eq:constraintForm}
\end{equation}
Not only is the cardinality  bound $m$  convenient  to set, because of the sparsity assumption, but    the $\ell_2$ shrinkage parameter  $\eta$ is easy to tune. Indeed,  $\eta$ is usually not a very sensitive parameter and   does not require a large grid of candidate values. Many researchers simply fix  it at a small value (say, $1e-3$) which  can  effectively reduce the prediction error (e.g., \cite{Park07}).
Similarly, to handle the possible collinearity between  $\bsbpi$ and  $\bsb{1}$, we recommend adding mild ridge penalties  $\frac{\eta_l}{2}  \| \bsbl\|_2^2 + \frac{\eta_c}{2}\|\bsbc\|_2^2$ in \eqref{eq:constraintForm} (say, $\eta_l = 1e-4$ and $\eta_c=1e-2$).

The {constrained} optimization of  \eqref{eq:constraintForm} does not apparently fall into  the penalized framework proposed in Section \ref{sec:univariate}. However, we can adapt the technique to handle  it through a \textit{quantile} thresholding operator (as a variant of the hard-ridge thresholding \eqref{eq:l02-pen}).
The detailed recurrent network screening (\textbf{RNS})  algorithm  is   described as follows, where for notational simplicity    $\tilde \bsbX:= [\bsb{1} \  \bsbX]$, $\tilde\bsbB:=[\bsbu \  \bsbB^\tran]^\tran$, and we denote  by $\bsbA[I, J]$ the submatrix of $\bsbA$ consisting of the rows and columns indexed by $I$ and $J$, respectively.
\begin{algorithm}[h!]
{
\begin{algorithmic}
\caption{\ Recurrent Network Screening (\textbf{RNS})} \label{alg:sigscr}
\STATE \given\ $\tilde\bsbB^{(0)}$ (initial estimate), $m$ (cardinality bound), $\eta$ ($\ell_2$ shrinkage parameter). 
\STATE  $j\gets 0$
\REPEAT
\STATE \textit{\textbf{0)} } $j\gets j+1$
\STATE \textit{\textbf{1)} } $\bsbmu^{(j)} \gets \bsbpi(\tilde\bsbX\tilde\bsbB^{(j-1)})$
\STATE \textit{\textbf{2)} } Update $\bsbL =\mbox{diag}\{\bsbl\},\bsbD=\mbox{diag}\{\bsb{d}\}$, and $\bsbc$ by fitting a    weighted vector  least-squares   model:
\begin{equation}
\min_{\bsbL,\bsbD,\bsbc} \|\bsbW^{1/2}(\bsbY-[\bsbmu^{(j)} \bsbL - \bsbX \bsbD + \bsb{1} \bsbc^\tran])\|_F^2,
\label{eq:quadratic_multi}
\end{equation}
 with the solution denoted  by $\bsbL^{(j)}=\mbox{diag}\{\bsbl^{(j)}\}=\mbox{diag}\{l_1^{(j)},\cdots, l_n^{(j)}\}$,  $\bsbD^{(j)}=\mbox{diag}\{d_1^{(j)},\cdots, d_n^{(j)}\}$, and $\bsbc^{(j)}$. (This amounts to solving $n$ separate weighted least squares problems.)

\STATE \textit{\textbf{3)} } Construct $\tilde\bsbY \gets (\bsbY+  \bsbX \bsbD^{(j)} - \bsb{1} (\bsbc^{(j)})^\tran) (\bsbL^{(j)})^{-}$,  $\tilde\bsbW \gets   \bsbw \cdot(\bsbl^{(j)} \circ \bsbl^{(j)})^{ \tran}$, where $^-$ denotes the Moore-Penrose  pseudoinverse and $\circ$ is the Hadamard product. Let  $K_i^{(j)}$ be any constant no less than  $k_0(\tilde \bsbY[:,i], \tilde\bsbW[:,i]])  \|\tilde \bsbX\|_2^2$, and $\bsbK^{(j)}\gets\mbox{diag}\{K_1^{(j)}, \cdots, K_n^{(j)}\}$.
\STATE \textit{\textbf{4)} } Update $\bsbB$ and $\bsbu$: \\
\textbf{\textit{4.1)}} $\tilde{\bsb{\Xi}} \gets  \tilde\bsbB^{(j-1)} -  {\tilde{\bsbX}}^{\tran} \{\tilde\bsbW \circ \bsbxi(\tilde\bsbX\tilde\bsbB^{(j-1)}, \tilde\bsbY)\}(\bsbK^{(j)})^{-}$, $\bsbu^{(j)}\gets (\tilde{\bsb{\Xi}}[1, :])^\tran$, ${\bsb{\Xi}}\gets \tilde{\bsb{\Xi}}[2:\mbox{end, :}]$ (the submatrix of $\tilde \bsbXi$ without the first row), $\bsb{\eta}^{(j)}=\eta \cdot (\bsb{1}\bsb{1}^{\tran})(\bsbK^{(j)})^-$.\\
\textbf{\textit{4.2)}} Perform the hard-ridge thresholding  $\bsbB^{(j)} \gets \Theta_{HR} \left( {\bsb{\Xi}};  \bsb{\zeta}({\bsb{\Xi}}),  \bsb{\eta}^{(j)} \right)$ (cf. Section \ref{sec:univariate} for  the definition of $\Theta_{HR}$)
with an adaptive threshold
 matrix $\bsb{\zeta}({\bsb{\Xi}})$. The entries of $\bsb{\zeta}(\bsbXi)$ are all set to the medium of  the $m$th  and the $(m+1)$th largest components  of $\mbox{vec}(|{\bsb{\Xi}}|)$. See \eqref{thconstr}  for  other variants  when certain links must be maintained or forbidden. \\
\textbf{\textit{4.3)}} $\tilde\bsbB^{(j)}\gets [\bsbu^{(j)} \ \ (\bsbB^{(j)})^\tran]^\tran$

\UNTIL{the decrease in function value  is small} 
\STATE\deliver\ $\hat\bsbB = \bsbB^{(j)}$, $\hat\bsbu=\bsbu^{(j)}$, $\hat\bsbL = \bsbL^{(j)}$,  $\hat \bsbD = \bsbD^{(j)}$,  $\hat\bsbc=\bsbc^{(j)}$.
\end{algorithmic}
}
\end{algorithm}

\begin{theorem}
\label{th:constraintform}
Given any initial point $\tilde\bsbB^{(0)}$, Algorithm~\ref{alg:sigscr}
converges in the sense that with probability 1, the function value decreasing property holds:
\begin{align*}
F_{0}(\bsbB^{(j-1)},\bsbL^{(j-1)},\bsbD^{(j-1)}, \bsbu^{(j-1)}, \bsbc^{(j-1)})
 \\\geq   F_{0}(\bsbB^{(j)},\bsbL^{(j)},\bsbD^{(j)}, \bsbu^{(j)}, \bsbc^{(j)}),
\end{align*}
and all $\bsbB^{(j)}$ satisfy $\| \bsbB^{(j)}\|_0\leq m$, $\forall j\geq 1$. 
\end{theorem}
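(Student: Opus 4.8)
The plan is to read Algorithm~\ref{alg:sigscr} as a block majorization--minimization scheme with two sub-updates per outer pass --- an exact weighted least-squares update of $(\bsbL,\bsbD,\bsbc)$ in Step~\textbf{2}, and a cardinality-constrained surrogate-minimization update of $(\bsbB,\bsbu)$ in Step~\textbf{4} --- and to show that neither sub-update increases $F_0$, while the second always returns an iterate with at most $m$ nonzeros. Much of the analytic machinery is inherited from the proof of Theorem~\ref{th:fValDecrease}: the auxiliary functions $\xi$ and $k_0$ respectively encode the per-sample gradient of the weighted sigmoidal squared loss and a uniform bound on its curvature. The one genuinely new ingredient is that the penalty/thresholding correspondence \eqref{eq:construction} must be replaced by a cardinality-constrained analogue realized through the quantile hard-ridge operator in Step~\textbf{4.2}.

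Fixing the outer index $j$, the first step is immediate: in Step~\textbf{2} the matrix $\bsbmu^{(j)}=\bsbpi(\tilde\bsbX\tilde\bsbB^{(j-1)})$ is frozen and \eqref{eq:quadratic_multi} minimizes exactly the loss part of $F_0$ over $(\bsbL,\bsbD,\bsbc)$ (the ridge term $\tfrac{\eta}{2}\|\bsbB\|_F^2$ does not involve those blocks), so $F_0$ at $(\bsbB^{(j-1)},\bsbL^{(j)},\bsbD^{(j)},\bsbu^{(j-1)},\bsbc^{(j)})$ does not exceed its value at the previous iterate. I would then invoke the change of variables of Step~\textbf{3}: with $(\bsbL^{(j)},\bsbD^{(j)},\bsbc^{(j)})$ held fixed and $l_i^{(j)}\neq 0$ for every $i$ --- which holds with probability $1$ because the observations are continuously distributed, so $(\bsbL^{(j)})^{-}$ is a genuine inverse --- the loss becomes column-separable, equal to $\tfrac12\sum_i\sum_s \tilde w_{s,i}\big(\tilde y_{s,i}-\pi(\tilde\bsbx_s^{\tran}\tilde\bsba_i)\big)^2$ with $\tilde\bsbY,\tilde\bsbW$ as in the algorithm; on the null event $l_i^{(j)}=0$ the $i$th column of the loss is constant in $\bsba_i$ and may be dropped, which is why the conclusion is asserted only almost surely.

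The core of the proof is the quadratic majorization of each column loss $g_i(\tilde\bsba):=\tfrac12\sum_s \tilde w_{s,i}(\tilde y_{s,i}-\pi(\tilde\bsbx_s^{\tran}\tilde\bsba))^2$. Differentiation gives $\nabla g_i(\tilde\bsba)=\tilde\bsbX^{\tran}\mbox{diag}(\tilde\bsbW[:,i])\,\bsbxi(\tilde\bsbX\tilde\bsba,\tilde\bsbY[:,i])$ --- exactly the $i$th column of the matrix assembled in Step~\textbf{4.1} --- and a Hessian $\tilde\bsbX^{\tran}\mbox{diag}\big(\tilde w_{s,i}\,\partial_\theta\xi(\cdot,\tilde y_{s,i})\big)\tilde\bsbX$. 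The scalar estimate $|\partial_\theta\xi(\theta,y)|\le\tfrac1{16}\big(1+\tfrac{(1-2y)^2}{2}\big)$ valid for all $\theta$ --- the same calculus bound used to prove Theorem~\ref{th:fValDecrease}, cf.\ \cite{SheGLMTISP} --- forces the spectral norm of this Hessian to be at most $k_0(\tilde\bsbY[:,i],\tilde\bsbW[:,i])\,\|\tilde\bsbX\|_2^2\le K_i^{(j)}$. Hence $g_i$ has $K_i^{(j)}$-Lipschitz gradient, and the standard quadratic upper bound for smooth functions gives $g_i(\tilde\bsba)\le \tfrac{K_i^{(j)}}{2}\|\tilde\bsba-\tilde\bsbXi[:,i]\|_2^2+\mbox{const}_i$, tight at $\tilde\bsba=\tilde\bsba_i^{(j-1)}$, where $\tilde\bsbXi$ is precisely the gradient step of Step~\textbf{4.1}. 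Summing over $i$ and appending $\tfrac{\eta}{2}\|\bsbB\|_F^2$ yields a separable quadratic majorizer $\mathcal G(\bsbB,\bsbu)$ of $F_0(\cdot,\cdot;\bsbL^{(j)},\bsbD^{(j)},\bsbc^{(j)})$ that is tight at $(\bsbB^{(j-1)},\bsbu^{(j-1)})$.

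It then remains to verify that Step~\textbf{4} globally minimizes $\mathcal G$ subject to $\|\bsbB\|_0\le m$. The unpenalized, unconstrained block $\bsbu$ is minimized by $\bsbu^{(j)}=(\tilde\bsbXi[1,:])^{\tran}$; for $\bsbB$, on any fixed support $S$ with $|S|\le m$ the minimizer of $\sum_i\big[\tfrac{K_i^{(j)}}{2}\|\bsba_i-\bsbXi[:,i]\|_2^2+\tfrac{\eta}{2}\|\bsba_i\|_2^2\big]$ is the entrywise ridge-shrinkage of $\bsbXi$ on $S$ and zero off $S$ --- exactly $\Theta_{HR}(\cdot\,;\cdot,\bsb{\eta}^{(j)})$ with the rescaled ridge of Step~\textbf{4.1} --- and (once a common constant $K^{(j)}$ is used across columns, as the ``any constant no less than'' freedom in Step~\textbf{3} permits) the optimal $S$ reduces to retaining the $m$ largest-magnitude entries of $\bsbXi$, which is the action of $\bsb{\zeta}(\bsbXi)$ in Step~\textbf{4.2}. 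Combining the two sub-updates with the tangency identity and Step~\textbf{2} then yields the claimed $F_0^{(j)}\le F_0^{(j-1)}$ (for all $j\ge1$ if the initializer is taken $m$-sparse, and for $j\ge2$ in general, since the first thresholding already enforces feasibility), while $\|\bsbB^{(j)}\|_0\le m$ for all $j\ge1$ is immediate from the definition of the quantile threshold. I expect this last step to be the main obstacle: showing that combining ridge shrinkage with cardinality-based selection returns the \emph{exact} constrained minimizer of $\mathcal G$ requires the uniform normalization from a common $K^{(j)}$ so that the $\ell_2$ geometry is comparable across columns, and it requires handling ties in the magnitude ranking --- which is why $\bsb{\zeta}(\bsbXi)$ is set to the midpoint of the $m$th and $(m+1)$th order statistics of $\mbox{vec}(|\bsbXi|)$.
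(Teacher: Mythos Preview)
Your proposal is correct and follows essentially the same route as the paper: the paper's proof also reduces to the majorization argument of Theorem~\ref{th:fValDecrease} together with a lemma (Lemma~\ref{lemmaQuantile}) stating that quantile hard--ridge thresholding is a global minimizer of $\tfrac12\|\bsbA-\bsbB\|_F^2+\tfrac{\eta}{2}\|\bsbB\|_F^2$ subject to $\|\bsbB\|_0\le m$, after which the rest ``follows similar lines of Theorem~\ref{th:fValDecrease}.'' You have in fact been more careful than the paper on the point you flag as the main obstacle: Lemma~\ref{lemmaQuantile} is stated with a \emph{scalar} ridge parameter, so ranking entries of $\bsbXi$ by magnitude yields the exact constrained surrogate minimizer only when the column-wise constants $K_i^{(j)}$ are chosen equal---a freedom Step~\textbf{3} allows but the paper's short proof leaves implicit.
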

See Appendix~\ref{app:constraintform} for the  proof.

Step 2 can be implemented by solving $n$ weighted least squares.   Or, one can re-formulate it as a single-response problem to obtain the solution  $(\bsbl^{(j)}, \bsb{d}^{(j)}, \bsbc^{(j)})$ in one step. The latter way is usually more efficient. When there are ridge penalties  imposed on $\bsbl$ and $\bsbc$, the computation is similar.

In certain applications   self-regulations are  not allowed. Then  $\zeta({\bsb{\Xi}})$ should be   the medium of  the $m$th  and the $(m+1)$th largest  elements  of $|\bsbXi - \bsbXi \circ \bsb{I}|$  ($\bsb{\Xi}$ in absolute value after excluding its diagonal entries).\footnote{Throughout the paper   $| \bsbA|$ is the  absolute value of the elements of $\bsbA$. That is, for $\bsbA = [a_{ij}]$,  $|\bsbA|=[|a_{ij}|]$.} Other prohibited links can be similarly treated in determining the dynamic threshold.
In general, given $\mathcal T$  the index set of the links that must be maintained and $\mathcal F$  the index set of the links that are forbidden, the threshold is constructed as follows
\begin{align}
\begin{split}
&\bsb{\Xi}[i] \leftarrow 0, \forall i \in \mathcal F \\
&\bsb{\zeta}[i] \leftarrow \begin{cases} 0, & \mbox{ if } i \in \mathcal T \\  \left| \bsbXi[\mathcal T^c] \right|_{(m+1)}, &\mbox{ otherwise, }\end{cases}
\end{split}\label{thconstr}
\end{align}
where $\left|\bsbXi[\mathcal T^c]\right|_{(m+1)}$ is the $(m+1)$th largest element (in absolution value) in $\bsbXi[\mathcal T^c]$ (all entries of  $\bsb{\Xi}$ except those indexed by $\mathcal T$).
It is easy to show that  the   convergence result still   holds based on the argument in Appendix~\ref{app:constraintform}.

In implementation, we advocate reducing the network cardinality  in a \textbf{progressive} manner  to lessen greediness:     at the $j$th step, $m$ is replaced by $m(j)$, where $\{m(j)\}$  is a  monotone sequence of integers decreasing from $n(n-1)$ (assuming no self-regulation)   to $m$.   Empirical study shows that the sigmoidal decay cooling schedule $m{(j)}= \big\lceil 2n(n-1)/(1+e^{\alpha j}) \big\rceil$ with  $\alpha=0.01$, works well.

RNS  involves no  costly  operations at each iteration, and is simple to implement. It  runs efficiently  for large networks.
The  RNS estimate can be directly used for  analyzing the  network topology. One can also use it for  screening   (in which case  a relatively large value of $m$ is specified), followed by  a fine network learning algorithm restricted on the screened connections. In either case RNS substantially reduces the search space of candidate links.

\section{Stable-Sparse Sigmoidal  Network Learning}
\label{sec:stability}
For  a general dynamical system possibly nonlinear, stability is one of the most fundamental issues \cite{sastry1999nonlinear,lasalle1976stability}.  If a system's  equilibrium point is  \emph{asymptotically stable},  then the perturbed system  must approach the equilibrium point as   $t$ increases. Moreover, one of the main reasons many real network applications only have  limited number of observations measured after  perturbation  is that the associated dynamical systems stabilize fast (e.g., exponentially fast). This offers another important type of parsimony/shrinkage in network  parameter estimation.

To design a new type of regularization, we first investigate  stability conditions of sigmoidal recurrent networks  in Lyapunov's sense \cite{lyapunov1892general,lyapunov1992general}. Then, we develop a stable  sparse sigmoidal (\sss) network learning approach.

\subsection{Conditions for asymptotic stability }
\label{subsec:stabconds}
Recall the multivariate representation of  \eqref{sigode0}
\begin{align}
\frac{\rd \bsbx}{\rd t} = \bsbL \bsbpi( \bsbA  \bsbx + \bsbu) - \bsbD \bsbx + \bsbc.    \label{eq:sigode1_recall}
\end{align}
Because  $\bsbA$ is sparse and typically  singular and  the degradation rates $d_i$ are not necessarily  identical, in general    \eqref{eq:sigode1_recall}  can not be treated as an instance of  the Cohen-Grossberg neural networks \cite{Cohen-Gross1983}. We must first derive its own stability conditions to be considered in  network  estimation.


Hereinafter,  $\bsbA \succeq \bsbA'$ and $\bsbA \succ  \bsbA'$  stand for the positive  semi-definiteness and positive definiteness of   $\bsbA - \bsbA'$,  respectively, and the set of eigenvalues of $\bsbA$ is $\mbox{spec}(\bsbA)$.
Our conditions are stated below:
\begin{align}
 \bsbD  & \succ \bsb{0}, \tag{A1} \label{nonnegD}\\
 \bsbL &  \succeq \bsb{0}, \tag{A2} \label{nonnegL}\\
 \mbox{Re}(\lambda) & < 0 \mbox{ for any } \lambda \in \mbox{Spec}( \bsbL \bsbA  - 4\bsbD) \tag{A3a}, \label{psdcond0}\\
 \bsbL   \bsbA/2 + \bsbA^{\tran}  \bsbL/2 &  \prec  4 \bsbD, \tag{A3b} \label{psdcond1}
\end{align}

\begin{theorem}
\label{th:stability}
Suppose  \eqref{nonnegD} \& \eqref{nonnegL} hold. Then \eqref{psdcond0} guarantees the network defined by \eqref{eq:sigode1_recall} has a unique equilibrium point $\bsbx^*$ and is globally  {exponentially} stable in the sense that $\|\bsbx(t) - \bsbx^* \|_2^2 \leq e^{-\varepsilon t}  \|\bsbx(0) - \bsbx^* \|_2^2$ for any solution $\bsbx(t)$. The same conclusion holds  if \eqref{psdcond0} is replaced by  \eqref{psdcond1}.
\end{theorem}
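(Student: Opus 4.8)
The plan is to proceed in four stages: produce an equilibrium by a fixed-point argument, reduce to an error system, linearize the nonlinearity through a sector bound, and then run a Lyapunov argument in which the factor $4$ appearing in \eqref{psdcond0}--\eqref{psdcond1} is paired against the maximal slope $\pi'=\pi(1-\pi)\le 1/4$ of the sigmoid. For existence, note that an equilibrium of \eqref{eq:sigode1_recall} is exactly a fixed point of $\bsbT(\bsbx):=\bsbD^{-1}\!\big(\bsbL\bsbpi(\bsbA\bsbx+\bsbu)+\bsbc\big)$, which is well defined by \eqref{nonnegD}. Since every coordinate of $\bsbpi$ lies in $(0,1)$ and $\bsbL\succeq\bsb{0}$, the continuous map $\bsbT$ sends all of $\mathbb{R}^n$ into the compact convex box $\prod_i[c_i/d_i,(l_i+c_i)/d_i]$, hence maps that box into itself, and Brouwer's theorem yields a point $\bsbx^*$ with $\bsbT(\bsbx^*)=\bsbx^*$.

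Next I would set $\bsby(t)=\bsbx(t)-\bsbx^*$ and subtract the equilibrium identity from \eqref{eq:sigode1_recall} to get $\dot\bsby=\bsbL\big[\bsbpi(\bsbA\bsbx+\bsbu)-\bsbpi(\bsbA\bsbx^*+\bsbu)\big]-\bsbD\bsby$. Coordinatewise, the mean value theorem rewrites the $i$th entry of the bracket as $\gamma_i(t)\,\bsba_i^\tran\bsby$ with $\gamma_i(t)=\pi'(\theta_i)\in(0,1/4]$, so with $\bsbGamma(t)=\mathrm{diag}(\gamma_i(t))$ (hence $\bsb{0}\prec\bsbGamma(t)\preceq\tfrac14\bsbI$) the error dynamics become the linear time-varying system $\dot\bsby=(\bsbL\bsbGamma(t)\bsbA-\bsbD)\bsby$. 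I would also keep the finer sector facts $\delta_i(\bsba_i^\tran\bsby)\ge 0$ and $\delta_i(\bsba_i^\tran\bsby)\ge 4\delta_i^2$, where $\delta_i=\pi(\bsba_i^\tran\bsbx+u_i)-\pi(\bsba_i^\tran\bsbx^*+u_i)$, since monotonicity and the slope bound are exactly what turn into the $4\bsbD$ in the hypotheses. Then I would try the Lyapunov function $V(\bsby)=\tfrac12\|\bsby\|_2^2$, for which $\dot V=\bsby^\tran(\bsbL\bsbGamma\bsbA-\bsbD)\bsby$; using that $\bsbL,\bsbGamma,\bsbD$ are diagonal one has $\bsbL\bsbGamma\bsbA-\bsbD=\bsbGamma(\bsbL\bsbA-4\bsbD)+(4\bsbGamma-\bsbI)\bsbD$, in which $(4\bsbGamma-\bsbI)\bsbD$ is diagonal and $\preceq\bsb{0}$. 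Under \eqref{psdcond1} the symmetric part $\tfrac12(\bsbL\bsbA+\bsbA^\tran\bsbL)-4\bsbD$ is negative definite, so this decomposition is well suited; under \eqref{psdcond0} I would first translate the eigenvalue condition on $\bsbL\bsbA-4\bsbD$ into a quadratic Lyapunov inequality in an adapted metric. The target in either case is $\dot V\le-\varepsilon V$ along every trajectory, uniformly in $t$.

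Granting that, Gr\"onwall's inequality gives $\|\bsby(t)\|_2^2\le e^{-\varepsilon t}\|\bsby(0)\|_2^2$, which is global exponential stability at the stated rate, and uniqueness of the equilibrium comes for free: any second equilibrium $\bsbx^{**}$ generates the constant solution $\bsbx(t)\equiv\bsbx^{**}$, which would then have to converge to $\bsbx^*$, forcing $\bsbx^{**}=\bsbx^*$. Finally, \eqref{psdcond1} implies \eqref{psdcond0}, since a real matrix whose symmetric part is negative definite is Hurwitz; this is how the ``same conclusion'' clause is obtained.

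The step I expect to be the real obstacle is the uniform Lyapunov decrease. Each frozen generator $\bsbL\bsbGamma\bsbA-\bsbD$ inherits good spectrum from \eqref{psdcond0}/\eqref{psdcond1}, but the secant slopes $\gamma_i(t)$ sweep out the whole box $(0,1/4]^n$ and do not commute with $\bsbA$, so a single quadratic certificate valid for all of them is not automatic; in fact with the bare Euclidean $V$ the derivative need not be negative at every state (take one coordinate of $\bsbGamma$ at $1/4$ and another near $0$). I would therefore either augment $V$ with a Lyapunov--Lur'e type term $\sum_i l_i\int_0^{\bsba_i^\tran\bsby}\!\big(\pi(\bsba_i^\tran\bsbx^*+u_i+s)-\pi(\bsba_i^\tran\bsbx^*+u_i)\big)\rd s$ so that the estimates $\delta_i(\bsba_i^\tran\bsby)\ge 4\delta_i^2$ can be brought to bear, or replace $\|\cdot\|_2$ by a weighted norm tailored to $\bsbL\bsbA-4\bsbD$ (at the cost of a condition-number constant in front of $e^{-\varepsilon t}$). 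Exhibiting one Lyapunov function that decreases for \emph{every} admissible $\bsbGamma(t)$ simultaneously, not merely pointwise, is where the argument must be done carefully; the fixed-point step, the error-system reduction, and the Gr\"onwall conclusion are routine by comparison.
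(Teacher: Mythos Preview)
Your outline coincides with the paper's proof: Brouwer on $\bsbx\mapsto\bsbD^{-1}(\bsbL\bsbpi(\bsbA\bsbx+\bsbu)+\bsbc)$, reduction to the error system via the diagonal secant matrix (the paper's $\bsbG$ is your $\bsbGamma$) with $\bsb{0}\prec\bsbG\preceq\bsbI/4$, a quadratic Lyapunov function, and uniqueness deduced a posteriori from global exponential stability. For the step you call the real obstacle, the paper takes exactly the weighted-norm option you list: under \eqref{psdcond0} it solves the Lyapunov equation $\bsbP(\bsbD-\bsbL\bsbA/4)+(\bsbD-\bsbL\bsbA/4)^{\tran}\bsbP=-\bsbR$ for $\bsbP\succ\bsb{0}$ and uses $V=\tfrac12(\bsbx-\bsbx^*)^{\tran}\bsbP(\bsbx-\bsbx^*)$, while under \eqref{psdcond1} it simply sets $\bsbP=\bsbI$. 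It then asserts $\bsbP\bsbL\bsbG\bsbA+\bsbA^{\tran}\bsbG\bsbL\bsbP\preceq(\bsbP\bsbL\bsbA+\bsbA^{\tran}\bsbL\bsbP)/4$ directly from $\bsbG\preceq\bsbI/4$ as ``easy to verify.'' That is precisely the uniformity-in-$\bsbGamma$ issue you flag, and your caution is warranted: this matrix inequality does not follow from $\bsbG\preceq\bsbI/4$ alone once $\bsbG$ fails to commute with $\bsbA$ (a $2\times2$ skew-symmetric $\bsbA$ with unequal $g_i$ already breaks it, even for $\bsbP=\bsbI$). So your plan and the paper's argument agree in structure; you are simply more scrupulous about a step the paper elides. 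The Lur'e-type integral augmentation you propose as an alternative is not used there.
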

See Appendix~\ref{app:stability} for the detailed proof.

Figure~\ref{fig:SRNprocess_stableAndUnstable} shows an example of stochastic processes generated from a stable recurrent network  and an unstable recurrent network  respectively. In the upper panel, the recurrent network  system parameters satisfy the stability condition \eqref{psdcond0}, while those in the lower panel violate  \eqref{psdcond0}. (In both situations, the number of nodes is 10 and the diffusion parameter $\sigma$ is fixed at  $0.5$.)
The differences between the two models are obvious.

In reality,  asymptotically stable systems are commonly observed. The stability conditions reflect  structural characteristics. For example, when all $l_i$ are equal and $d_i>0$, then the \textit{skew-symmetry} of $\bsbA$, i.e., $\bsbA = - \bsbA^\tran$, guarantees asymptotic stability. The  information provided by the constrains can assist topology learning. This  motivates the design of sparse recurrent network learning with stability.
\begin{figure}[h!]
  \centering
    \includegraphics[width=0.5\textwidth]{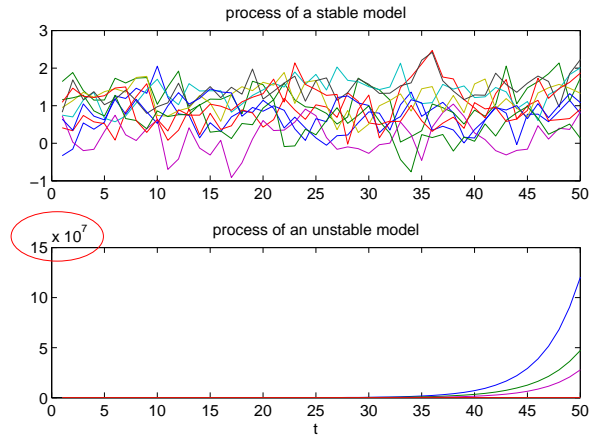}
 \caption{Stochastic processes generated from a stable recurrent network model and an unstable recurrent network model.}
\label{fig:SRNprocess_stableAndUnstable}
\end{figure}

\subsection{{\sss} network  estimation }
\label{sec:s3}

\eqref{psdcond0} is less restrictive than  \eqref{psdcond1}.
In optimization, imposing \eqref{psdcond0} seems however difficult. 
We propose
\begin{align}
\begin{gathered}
\min_{\bsbB, \bsbL, \bsbD, \bsbu, \bsbc}  \frac{1}{2} \| \bsbW^{1/2} \{ \bsbY - [\bsb{\pi}(\bsbX \bsbB + \bsb{1} \bsbu^\tran) \bsbL - \bsbX \bsbD + \bsb{1} \bsbc^{\tran} ]\} \|_F^2  + P(\bsbB,\bsbLamb)=:F_1, \\\mbox{ s.t. }  \bsbL \succeq \bsb{0}, \bsbD  \succeq \bsb{0},  (\bsbL   \bsbB^\tran + \bsbB  \bsbL)/2   \preceq  4 \bsbD,
\end{gathered}
\label{eq:multis3_obj}\end{align}
referred to as the Stable-Sparse Sigmoidal (\sss) network learning.
The stability constraints are now imposed on $\bsbB$ as the transpose of the raw coefficient matrix $\bsbA$. Similar to the discussion of \eqref{eq:constraintForm},  in implementation, we  add mild $\ell_2$ penalties on $l_i$ and $c_i$ to deal with possible collinearity, and it is  common to  replace $\bsb{0}$ by $\epsilon\bsbI$ with $\epsilon$ extremely small.

In this part, we focus on the $\ell_1$ penalty  $P(\bsbB,\bsbLamb)=\| \bsbLamb \circ \bsbB\|_1$ where matrix $\bsb{\Lambda}$  usually has the form $\bsb{\Lambda}[k, k']=\| \bsbX[:, k]\|_2$ for any $k,k'\leq n$ (other options are possible.) Introducing the componentwise regularization matrix is helpful  when one wants to  maintain or  forbid certain links based on prior knowledge or preliminary screening.
For example, if no self-regulation is allowed, then all diagonal entries of $\bsb{\Lambda}$ ought to be  $+\infty$.

It turns out that one can modify Step 2 and Step 4 of the RNS algorithm to solve \eqref{eq:multis3_obj}.

First,  the non-negativity of $l_i$ and $d_i$ can be directly incorporated, if  the weighted least-squares problem in Step 2  is replaced by the following  programming problem with generalized non-negativity constraints:
\begin{align}
\mbox{\textit{\textbf{Step 2)}}} &  \qquad  \mbox{ Solve }
\min_{\bsbL,\bsbD,\bsbc} \|\bsbW^{1/2}\{\bsbY-[\bsbmu^{(j)} \bsbL - \bsbX \bsbD + \bsb{1} \bsbc^\tran]\}\|_F^2 \notag \\
\mbox{s.t. } & \bsbL =\mbox{diag}\{l_i\} \succeq \bsb{0}, \bsbD =\mbox{diag}\{d_i\} \succeq \bsb{0},  (\bsbL   \bsbB^{(j-1) \tran}+ \bsbB^{(j-1)}  \bsbL)/2   \preceq  4 \bsbD.
\label{objstep2}
\end{align}
A variety of algorithms and packages can be used. 
(The technique in Appendix \ref{app:s3conv} also applies.)

Integrating  the spectral constraint on $\bsbB$ into network learning is much trickier. We modify Step 4 of Algorithm \ref{alg:sigscr} as follows.
\\

\begin{compactenum}
\item[\textbf{\textit{Step 4)} }] Update $\bsbB$ and $\bsbu$:
\begin{compactenum}
\item[\textbf{\textit{4.1)}}]  $\tilde{\bsb{\Xi}} \gets  \tilde\bsbB^{(j-1)} -  \tilde\bsbX^{\tran} \{\tilde\bsbW \circ \bsbxi(\tilde\bsbX \tilde\bsbB^{(j-1)}, \tilde\bsbY)\} (\bsbK^{(j)})^{-}$,  $\bsbu^{(j)}\gets (\tilde{\bsb{\Xi}}[1,:])^\tran$, ${\bsb{\Xi}}\gets \tilde{\bsb{\Xi}}[2:\mbox{end}, :]$, and  $\bsb{\Lambda}^{(j)} = \bsb{\Lambda}\cdot  (\bsbK^{(j)})^{-}$.
\item[\textbf{\textit{4.2)}}]  Perform the inner loop iterations, starting with
$\bsbB_3 \gets \bsb{\Xi}$, $\bsbC_3 \gets (\bsbL^{(j)}   \bsbB_3^\tran + \bsbB_3  \bsbL^{(j)})/2$, $\bsbP = \bsb{0}$, $\bsbQ_{\bsbB}=\bsb{0}$, $\bsbQ_{\bsbC}=\bsb{0}$, $\bsbR=\bsb{0}$, and the operators ${\mathcal P}^1, {\mathcal P}^2, {\mathcal P}^3$ defined in Lemmas \ref{proj_l1}-\ref{proj_spec}:\\
\textbf{repeat}
\begin{compactenum}[i)]
\item $\bsbB_1 \gets {\mathcal P}^1 ( \bsbB_3 + \bsbP; \bsb{\Lambda}^{(j)})$, $\bsbC_1 \gets \bsbC_3$, $\bsbP \gets \bsbP + \bsbB_3 - \bsbB_1$.
\item
  $[\bsbB_2, \bsbC_2] \gets {\mathcal P}^2(\bsbB_1 + \bsbQ_{\bsbB}, \bsbC_1 + \bsbQ_{\bsbC}; \bsbL^{(j)}) $, \\ $\bsbQ_{\bsbB} \gets \bsbQ_{\bsbB}+\bsbB_1- \bsbB_2$, $\bsbQ_{\bsbC}\gets  \bsbQ_{\bsbC} +  \bsbC_1 - \bsbC_2$.  
\item  $\bsbB_3 \gets \bsbB_2$, $\bsbC_3 \gets {\mathcal P}^3(\bsbC_2 + \bsbR; \bsbD^{(j)})$, $\bsbR\gets \bsbR + \bsbC_2 - \bsbC_3$.
\end{compactenum}
\textbf{until} convergence
\\
$\bsbB^{(j)} \gets \bsbB_3$
\item[\textit{\textbf{4.3}) }] $\tilde\bsbB^{(j)}\gets [\bsbu^{(j)} \ (\bsbB^{(j)})^\tran]^\tran$
\end{compactenum}
\end{compactenum}

Algorithm  \ref{alg:sigscr} with such modifications in  Step 2 and Step 4 is referred to as the {\sss}  estimation algorithm.\\

\begin{theorem} \label{th:s3conv}
Given any initial point $\tilde\bsbB^{(0)}$, the  {\sss}  algorithm
converges in the sense that the function value decreasing property holds,
and furthermore, $\bsbB^{(j)}$, $\bsbL^{(j)}$, and $\bsbD^{(j)}$ satisfy $\bsbL^{(j)} \succeq \bsb{0}, \bsbD^{(j)}  \succeq \bsb{0},  (\bsbL^{(j)}   (\bsbB^{(j)})^\tran + \bsbB^{(j)}  \bsbL^{(j)})/2   \preceq  4 \bsbD^{(j)}$ for any $j\geq 1$.
\end{theorem}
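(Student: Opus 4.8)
The plan is to follow the proof of Theorem~\ref{th:constraintform} almost verbatim, substituting its unconstrained Step~2 least-squares fit by the constrained program \eqref{objstep2} and its quantile-thresholding Step~4 by the Dykstra-type inner loop, and to check that the two substitutions still yield the two assertions: monotone decrease of $F_1$ and feasibility of every iterate for the constraints in \eqref{eq:multis3_obj}. Note first that the feasible set of \eqref{objstep2} is never empty (take $\bsbL=\bsb{0}$, $\bsbD=\bsbI$), so every Step~2 is well posed. Fix $j\ge 1$, assume inductively that the iterate at step $j-1$ (understood as the output after the first full sweep when $j=1$) is feasible, abbreviate by $F_1^{(j)}$ the value of $F_1$ at the $j$th iterate, and target the chain
\[
 F_1^{(j-1)} \;\ge\; F_1\bigl(\bsbB^{(j-1)},\bsbL^{(j)},\bsbD^{(j)},\bsbu^{(j-1)},\bsbc^{(j)}\bigr) \;\ge\; F_1^{(j)},
\]
the first inequality from Step~2, the second from Step~4.

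\emph{Step~2 and PSD feasibility.} By the inductive hypothesis $(\bsbL^{(j-1)}(\bsbB^{(j-1)})^\tran+\bsbB^{(j-1)}\bsbL^{(j-1)})/2\preceq 4\bsbD^{(j-1)}$, so $(\bsbL^{(j-1)},\bsbD^{(j-1)},\bsbc^{(j-1)})$ is feasible for \eqref{objstep2}, whose constraint set is the one attached to the current $\bsbB^{(j-1)}$. Its minimizer $(\bsbL^{(j)},\bsbD^{(j)},\bsbc^{(j)})$ therefore does not increase the objective of \eqref{objstep2}, which equals the loss part of $F_1$ at $\bsbB=\bsbB^{(j-1)}$, $\bsbu=\bsbu^{(j-1)}$; since $P(\bsbB,\bsb{\Lambda})$ involves none of these variables, the first inequality follows. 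By construction $\bsbL^{(j)}\succeq\bsb{0}$, $\bsbD^{(j)}\succeq\bsb{0}$, and --- the point we will reuse --- $(\bsbL^{(j)}(\bsbB^{(j-1)})^\tran+\bsbB^{(j-1)}\bsbL^{(j)})/2\preceq 4\bsbD^{(j)}$, i.e.\ $\bsbB^{(j-1)}$ is still feasible for the spectral cone determined by the \emph{updated} $\bsbL^{(j)},\bsbD^{(j)}$.

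\emph{Step~4, surrogate decrease, and the spectral constraint on $\bsbB^{(j)}$.} With $\bsbL^{(j)},\bsbD^{(j)},\bsbc^{(j)}$ frozen, rewrite the loss part of $F_1$ in $\tilde\bsbB=[\bsbu\ \bsbB^\tran]^\tran$, exactly as in Step~3 and in the proof of Theorem~\ref{th:fValDecrease}, in the separable reduced form $\tfrac12\sum_i\|\tilde\bsbY[:,i]-\bsbpi(\tilde\bsbX\tilde\bsbB[:,i])\|_{\tilde\bsbW[:,i]}^2$. Using $\xi$ of \eqref{xidef} as the per-coordinate gradient and the global curvature bound behind $k_0$ of \eqref{k0def}, the choice $K_i^{(j)}\ge k_0(\tilde\bsbY[:,i],\tilde\bsbW[:,i])\,\|\tilde\bsbX\|_2^2$ produces a quadratic majorant of this loss, tight at $\tilde\bsbB^{(j-1)}$; adding $P(\cdot,\bsb{\Lambda})$ and completing the square gives a surrogate $G^{(j)}(\bsbB,\bsbu)$ with $G^{(j)}(\bsbB^{(j-1)},\bsbu^{(j-1)})=F_1(\bsbB^{(j-1)},\bsbL^{(j)},\bsbD^{(j)},\bsbu^{(j-1)},\bsbc^{(j)})$ and $G^{(j)}\ge F_1(\cdot,\bsbL^{(j)},\bsbD^{(j)},\cdot,\bsbc^{(j)})$ pointwise. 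Minimizing $G^{(j)}$ over the free intercept and over $\bsbB$ in the cone $\mathcal C^{(j)}:=\{\bsbB:(\bsbL^{(j)}\bsbB^\tran+\bsbB\bsbL^{(j)})/2\preceq 4\bsbD^{(j)}\}$ then gives $F_1^{(j)}\le G^{(j)}(\bsbB^{(j)},\bsbu^{(j)})\le G^{(j)}(\bsbB^{(j-1)},\bsbu^{(j-1)})=F_1(\bsbB^{(j-1)},\bsbL^{(j)},\bsbD^{(j)},\bsbu^{(j-1)},\bsbc^{(j)})$, the middle step using $\bsbB^{(j-1)}\in\mathcal C^{(j)}$ from the previous paragraph. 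It remains to check that Step~4.2 actually returns this constrained minimizer. Introducing the slack $\bsbC=(\bsbL^{(j)}\bsbB^\tran+\bsbB\bsbL^{(j)})/2$, the surrogate problem is the minimization over $(\bsbB,\bsbC)$ of a sum of three proper closed convex functions: the strongly convex $\tfrac12\|\bsbB-\bsb{\Xi}\|_F^2+\|\bsb{\Lambda}^{(j)}\circ\bsbB\|_1$, the indicator of the affine subspace $\{\bsbC=(\bsbL^{(j)}\bsbB^\tran+\bsbB\bsbL^{(j)})/2\}$, and the indicator of $\{\bsbC\preceq 4\bsbD^{(j)}\}$, whose proximity, resp.\ projection, maps are $\mathcal P^1,\mathcal P^2,\mathcal P^3$ by Lemmas~\ref{proj_l1}--\ref{proj_spec}; the inner iteration with the correction matrices $\bsbP,\bsbQ_{\bsbB},\bsbQ_{\bsbC},\bsbR$ is a Dykstra-type alternating scheme for this sum, hence converges to its (unique, by strong convexity) minimizer, whose $\bsbC$-component lies in all three sets, which certifies $(\bsbL^{(j)}(\bsbB^{(j)})^\tran+\bsbB^{(j)}\bsbL^{(j)})/2\preceq 4\bsbD^{(j)}$. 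Since $\bsbL,\bsbD$ are untouched in Step~4, the full iterate at step $j$ is feasible for \eqref{eq:multis3_obj}, and the induction closes.

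The main obstacle is that last point --- showing the three-operator inner loop converges to the genuine constrained minimizer of $G^{(j)}$, not merely to some feasible point, and that its limit meets all three constraint sets at once. This needs the Lemmas~\ref{proj_l1}--\ref{proj_spec} identification of $\mathcal P^1,\mathcal P^2,\mathcal P^3$ with the prox/projection operators of the correct summands (notably the eigenvalue-thresholding form of the projection onto $\{\bsbC\preceq 4\bsbD^{(j)}\}$ and the closed form of the projection onto the $\bsbL^{(j)}$-dependent subspace), together with the convergence theory for the Dykstra-like proximal algorithm applied to a finite sum of convex functions; strong convexity of the first summand makes the minimizer unique, which is what keeps the decrease argument unambiguous. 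The remaining ingredients --- the Step~2 feasibility bookkeeping, the $k_0$-majorization, and the telescoping of $F_1$ --- are routine and parallel Appendices~\ref{app:fValDecrease} and~\ref{app:constraintform}.
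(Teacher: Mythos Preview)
Your overall strategy matches the paper's, but the analysis of Step~4.2 has a genuine gap. You claim that the inner loop minimizes the constrained surrogate $\tfrac12\|\bsbB-\bsb{\Xi}\|_F^2+\|\bsb{\Lambda}^{(j)}\circ\bsbB\|_1$ over $\mathcal C^{(j)}$, by writing it as a sum of three convex functions whose prox/projections are $\mathcal P^1,\mathcal P^2,\mathcal P^3$. But $\mathcal P^1$ is soft-thresholding, i.e.\ the prox of $\|\bsb{\Lambda}^{(j)}\circ\,\cdot\,\|_1$ alone, not of the quadratic-plus-$\ell_1$ function you list; and the Dykstra-like proximal scheme (the Bauschke--Combettes result cited by the paper) computes the resolvent of the \emph{sum} of the three pieces at the initial point $(\bsb{\Xi},\bsb{\Xi}_{\bsbC})$. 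Concretely, the inner loop converges to the minimizer of
\[
\tfrac12\|\bsbB-\bsb{\Xi}\|_F^2+\tfrac12\|\bsbC-\bsb{\Xi}_{\bsbC}\|_F^2+\|\bsb{\Lambda}^{(j)}\circ\bsbB\|_1
\quad\mbox{s.t.}\quad \bsbC=(\bsbL^{(j)}\bsbB^\tran+\bsbB\bsbL^{(j)})/2,\ \ \bsbC\preceq 4\bsbD^{(j)},
\]
with $\bsb{\Xi}_{\bsbC}=(\bsbL^{(j)}\bsb{\Xi}^\tran+\bsb{\Xi}\bsbL^{(j)})/2$ (this is the paper's Lemma~\ref{globalopt}). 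Because the affine constraint ties $\bsbC$ to $\bsbB$, the extra term $\tfrac12\|\bsbC-\bsb{\Xi}_{\bsbC}\|_F^2$ is a nontrivial function of $\bsbB$, so this is \emph{not} the same problem as minimizing your $G^{(j)}$ over $\mathcal C^{(j)}$, and your chain $G^{(j)}(\bsbB^{(j)},\bsbu^{(j)})\le G^{(j)}(\bsbB^{(j-1)},\bsbu^{(j-1)})$ does not follow from optimality of $\bsbB^{(j)}$ for the problem the inner loop actually solves.

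The paper closes this gap with an additional surrogate layer (Lemma~\ref{iterBopt}): defining $g(\bsbB',\bsbC',\bsbB,\bsbC)$ by inserting the extra $\tfrac12\|\bsbC-\bsbC'\|_F^2$ into the usual linearize-and-add-quadratic construction, one checks by completing the square that $g(\bsbB',\bsbC',\bsbB,\bsbC)-f(\bsbB)=\tfrac12\|\bsbC-\bsbC'\|_F^2\ge 0$ for all $(\bsbB,\bsbC)$, while $g(\bsbB',\bsbC',\bsbB',\bsbC')=f(\bsbB')$; hence the $(\bsbB^o,\bsbC^o)$ returned by the Dykstra loop satisfies $f(\bsbB^o)\le g(\bsbB',\bsbC',\bsbB^o,\bsbC^o)\le g(\bsbB',\bsbC',\bsbB',\bsbC')=f(\bsbB')$, and feasibility of $\bsbB^o$ comes for free from Lemma~\ref{globalopt}. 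Your induction on feasibility across Step~2 (showing $\bsbB^{(j-1)}\in\mathcal C^{(j)}$ after updating $\bsbL,\bsbD$) is correct and is exactly what makes this comparison valid; what is missing is the $g$-majorization that absorbs the spurious $\bsbC$-quadratic introduced by the Dykstra formulation.
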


The proof is given in Appendix \ref{app:s3conv}.

We observe that practically, the inner loop converges fast (usually within 100  steps). Moreover, to guarantee the functional value is decreasing, one  does not have to run the inner loop till convergence. 
Although it is possible to apply the stable-sparse estimation  directly,  we recommend running the screening algorithm (RNS) first, followed by the fine {\sss} network learning.

%
%

\section{Experiments}
\label{sec:dataexps}
\subsection{Simulation Studies}
\label{sec:simulation}
In this subsection, we conduct synthetic data experiments to demonstrate the performance of the proposed learning framework in  recurrent network analysis. 
An Erd\H{o}s-R\'enyi-like scheme of generating  system parameters, including a sparse regulation matrix $\bsbA$, is described as follows. Given any node $i$, the number of its regulators is   drawn from a binomial distribution $\mathcal{B}(n-1,\frac{1}{2n})$. The regulator set  $S_i$ is  chosen randomly from the rest $(n-1)$ nodes (excluding node $i$ itself). If $j \notin S_i$,  $a_{ij}=0$. Otherwise,  $a_{ij}$ follows  a mixture of two Gaussians  $\mathcal{N}(1.5,0.1^2)$ ad $\mathcal{N}(-1.5,0.1^2)$ with probability $1/2$ for each. Then draw random $\bsbl$, $\bsbu$, $\bsbc$  from Gaussian distributions (independently)   $l_i\sim\mathcal{N}(1.5,0.1^2), u_i\sim\mathcal{N}(0,0.1^{2}), c_i\sim\mathcal{N}(0,0.1^{2})$. Finally   $\bsb{d}$ is generated so that the system satisfies the stability condition~\eqref{psdcond0}. 

\textit{Topology identification.}
\label{sec:RNSperformance}
First, we  test the performance of   RNS in  recurrent network topology identification. We  compare it with    TSNI \cite{bansal2006sos} and  QTIS  \cite{S2}.   TSNI is a popular network learning approach and applies  principle component analysis for dimensionality reduction.  QTIS is a network screening algorithm based on sparsity-inducing techniques.
To avoid the ad-hoc issue of parameter tuning,  ROC curves will be used to summarize  link detection results in a comprehensive way, in terms of true positive rate (TPR) and false positive rate (FPR).


We simulate two networks  according to the scheme introduced early. In  the first example we set   $n=10, T=100$,
 in the second  example  $n=200, T=500$, and in the third $n=100, T=1000$. 
Given all system parameters, we can call  Matlab functions {\sc sde} and {\sc sde.simulate} to generate   continuous-time stochastic processes  according to \eqref{eq:sigsde}.
The discrete-time observations are sampled from the stochastic processes with sampling period $\Delta T = 1$.

In this experiment,    the number of unknowns  in either case  is larger than the sample size, especially in {Ex.2} which has  about \textbf{41K} variables but only 500 observations.    Given any algorithm, we vary the target cardinality parameter $m$ from 1 to $n(n-1)$, collect all estimates,  and compute their associated TPRs and FPRs. The experiment is repeated for $50$ times and the averaged  rates are shown in  the  ROC curves in   Figure \ref{fig:roc}.
RNS beats TSNI and QTIS  by  a large margin. In fact, the  ROC curve of RNS  is \textit{everywhere} above the TSNI curve  and the QTIS curve.
\begin{figure*}[!]
  \centering
  \subfloat[Example 1.]{\label{fig:ex1_roc}\includegraphics[width=0.33\textwidth]{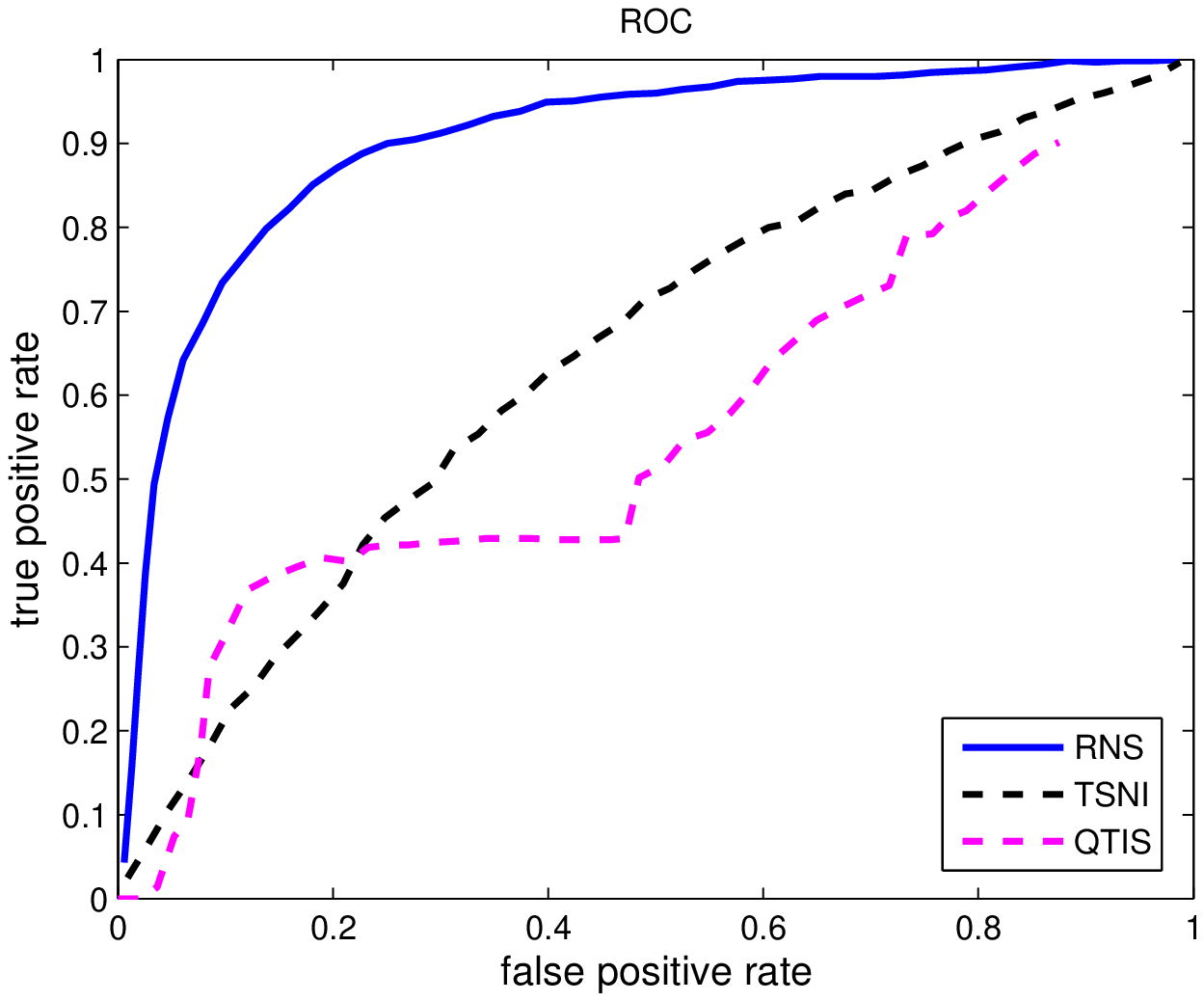}}
  \subfloat[Example 2.]{\label{fig:ex2_roc}\includegraphics[width=0.33\textwidth]{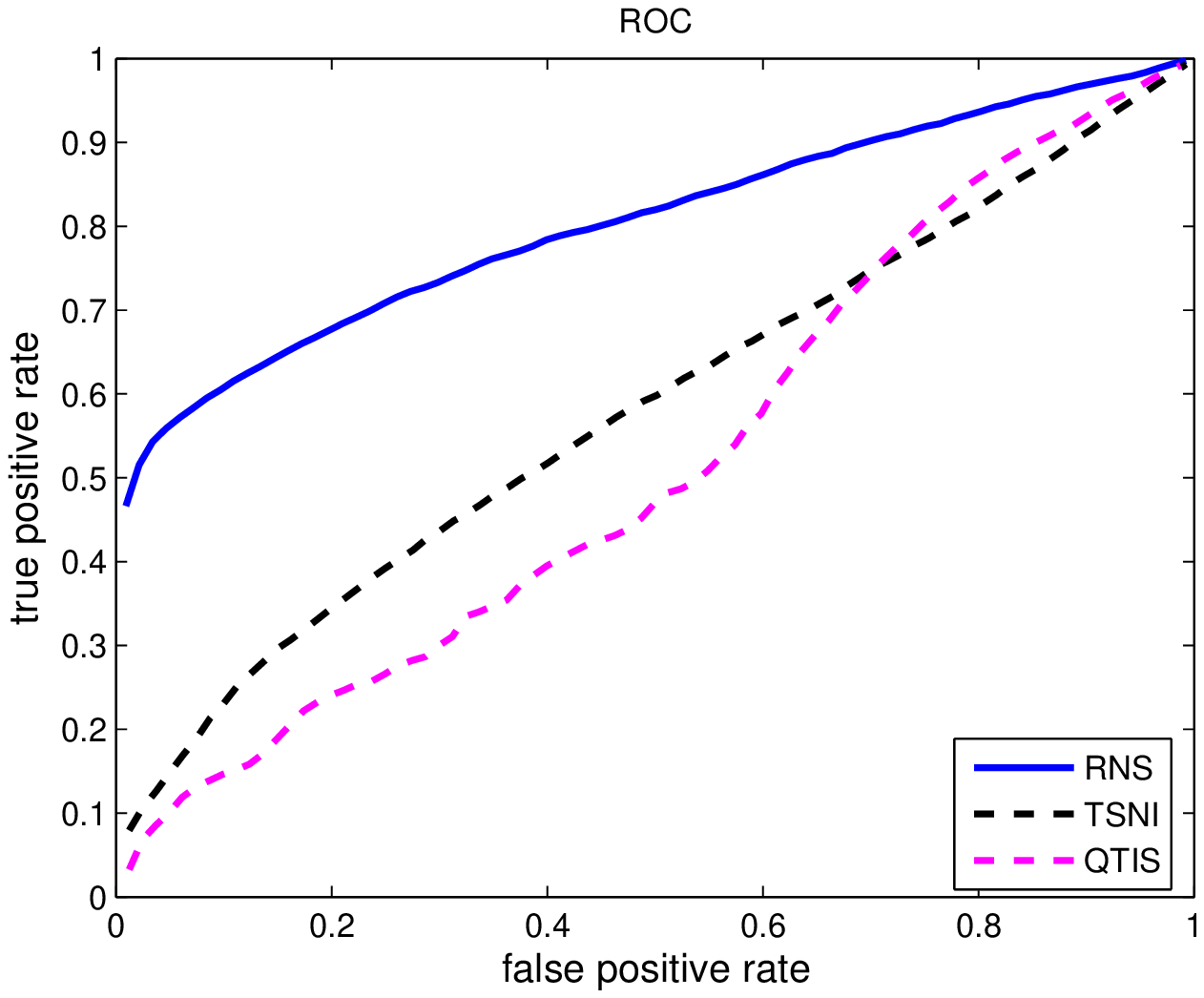}}
  \subfloat[Example 3.]{\label{fig:ex3_roc}\includegraphics[width=0.33\textwidth]{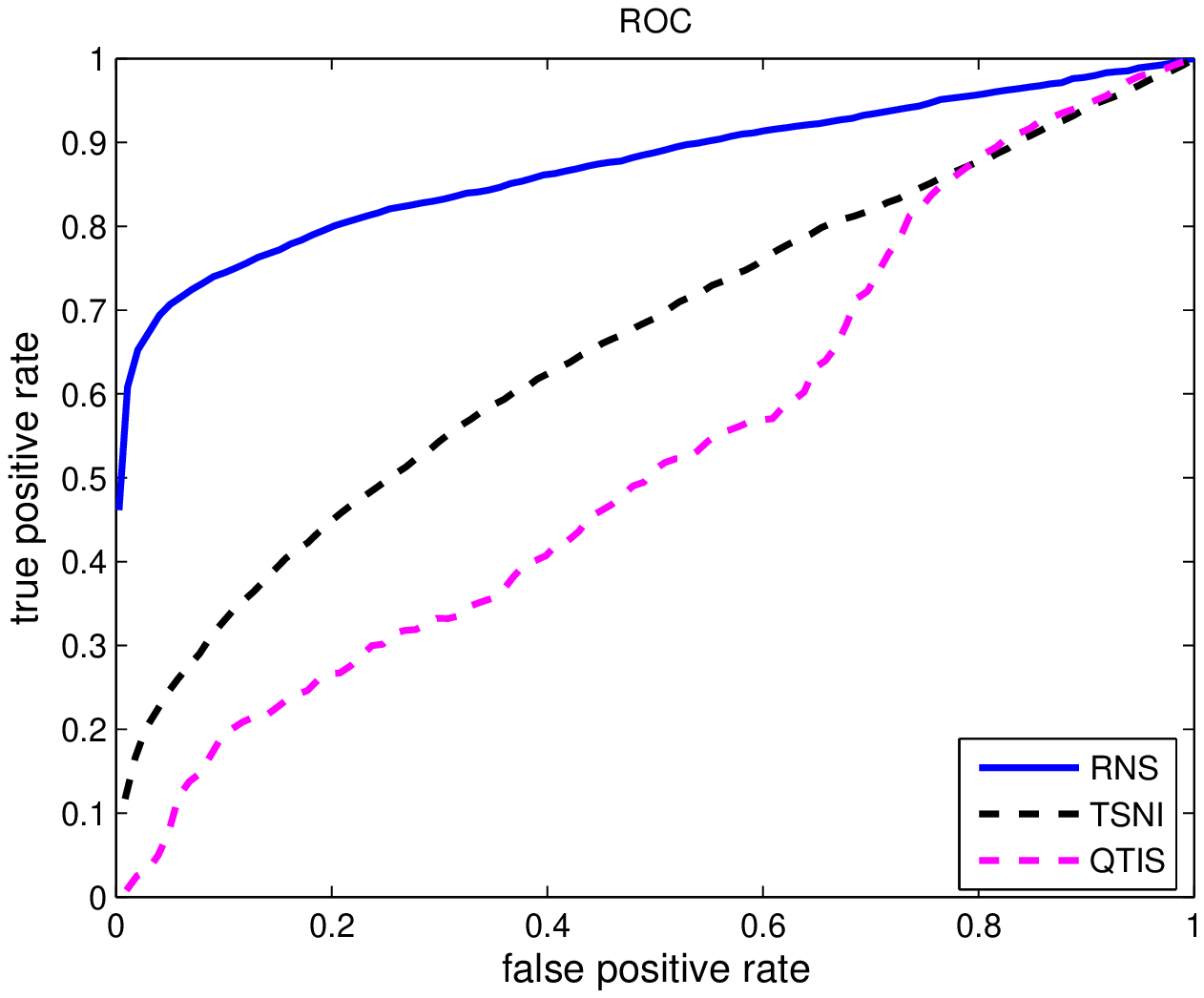}}
  \caption{Comparison of ROC curves.  \label{fig:roc}}
\end{figure*}

\textit{System stability.}
\label{sec:S3performance}
Next, we  show the necessity of  stable learning  in network dynamics forecasting. For simplicity,  the $\ell_1$ penalization is used. We compare   the {\sss} network estimation with the  approach based on sparse sigmoidal regression  in Section~\ref{sec:univariate} (with  no stability guarantee), denoted by {SigSpar}.

We  use two network examples (Ex.4 and Ex.5) with $n=20, 40$ respectively. In each setting, we generate    $T=20$  samples for training, and $200$ validation samples for  parameter tuning.  In this experiment, forecast error at  a future time point is the major  concern.  Suppose  $\bsbx(T)$, the network snapshot at $T$,  is  given.  With  system parameter estimates obtained, one can simulate a stochastic  process $\bsb{\hat{x}}(t)$ ($t\geq T$) starting with $\bsbx(T)$ based on model \eqref{eq:sigsde}. The forecast error at time point $T+h$ is defined as $\mbox{FE}=\|\bsbx(T+h) - \bsb{\hat{x}}(T+h)\|_2^2/n$. Long-term forecasting  corresponds to  large values of   $h$.   We repeat the experiment for  50 times and show the average FE  in Table~\ref{tab:forecasting}. The error of  SigSpar becomes extremely large as $h$ increases, because there is no stability guarantee of its network estimate, while {\sss} has excellent performance even in long-term forecasting.
\begin{table*}
\caption{Forecasting error comparison on the simulation data.} \label{tab:forecasting}
\begin{center}
\begin{tabular}{c|crrrrr}
  \hline\hline
      &  $h$ & 1 & 5  & 10 & 15 & 20
  \\ \hline
  \multirow{2}{*}{Ex.4}
 &   SigSpar  &   0.08  & 3.01   &  41.95  &  450.54   & 4.4$\times$10$^3$ \\
      & \sss &   0.04  & 0.48   &  1.01  &   1.39  &  1.76 \\
\hline \hline
  \multirow{2}{*}{Ex.5}
 &   SigSpar  &   0.35 & 23.76   & 41.95  &  7.3$\times$10$^3$ & 1.0$\times$10$^5$ \\
      & \sss &   0.06  & 0.70   &  1.75  &  2.80   &  4.04 \\
\hline \hline
\end{tabular}
\end{center}
\end{table*}

\subsection{Real data}
\label{sec:application}




\textit{Yeast gene regulatory network.}
We use RNS to study the transcriptional regulatory  network in   the yeast cell cycle.  The  dataset is publicly available and a detailed description of the microarray experiment is in \cite{spellman1998yeast}.
Following \cite{Chen2004}, we focus on the  20 genes whose expression patterns fluctuate in a periodic manner. The dataset contains their  expression levels recorded at 18 time points during a cell cycle. 
In this regulatory (sub)network, five genes have been identified as transcription factors, namely SWI4, HCM1, NDD1, SWI5, ASH1, and 19 regulatory connections from them to  target genes have been reported with direct evidence in the literature (cf. the YEASTRACT database at \url{http://yeastract.com/}). \cite{Chen2004} found 55 connections from the transcription factors to the target genes, of which 14 have  biological evidence (and so the true positive rate is   $14/19=73.7\%$). For a fair comparison, we also let RNS detect 55 connections from the transcription factors to the target genes, and achieved a higher  true positive rate  $89.5\%$. A detailed comparison of the identified  regulatory connections is shown in Table~\ref{tab:yeast}.

\begin{table*}
\caption{Regulatory connections of the Yeast cell cycle subnetwork. `$\times$' stands for the regulatory connections  identified by RNS,  `$\underline{\ }$' stands for those  by \cite{Chen2004}, and  `$\square$' stands for the confirmed regulatory connections  with published evidence in the literature.  }
\label{tab:yeast}
\centering
\begin{tabular}{cccccc} \hline
      &      SWI4      &    HCM1      &     NDD1      &    SWI5       &     ASH1   \\ \hline
SWI4  &                &              &               &            &       \\
HCM1  &   $\underline{\boxtimes}$    &              &               &            &       \\
NDD1  &   $\underline{\boxtimes}$   &              &               &            &       \\
SWI5  &                &             &               &            &       \\
ASH1  &                &              &               &   $\underline{\boxtimes}$  &       \\
CLN2  &   $\underline{\boxtimes}$  &              &               &            &       \\
SVS1  &   $\underline{\boxtimes}$  &              &               &            &       \\
SWE1  &   $\underline{\boxtimes}$   &              &               &            &       \\
MNN1  &   $\underline{\boxtimes}$ &              &               &           &       \\
CLB6  &   $\underline{\boxtimes}$ &             &               &           &       \\
HTA1  &   $\underline{\boxtimes}$  &              &               &            &       \\
HTB1  &   $\underline{\boxtimes}$  &              &               &            &     \\
HHT1  &               &              &               &            &       \\
CLB4  &                &              &               &  $\boxtimes$ &       \\
CLB2  &   $\boxtimes$   &              &    $\underline{\boxtimes}$  &            &       \\
CDC20 &               &              &               &           &       \\
SPO12 &                 &              &   $\underline{\square}$  &            &       \\
SIC1  &                 &              &               &  $\underline{\boxtimes}$  &       \\
CLN3  &    $\boxtimes$   &   $\boxtimes$    &            & $\underline{\boxtimes}$& $\square$\\
CDC46 &                  &              &              &            &       \\
\hline
\end{tabular}
\end{table*}


\textit{fMRI data.}
The resting state fMRI  data  provided  by the ADHD-200 consortium \cite{milham2012adhd}  have been preprocessed by the NIAK interface \cite{lavoie2012integration}.  The dataset we are using has 954 ROIs (regions of interest) and 257 time points. In the experiment, the first 200 observations are for training (and so $T=200$ and $n=954$), and the following 57 observations are for testing.
We applied  RNS  to get a network pattern, followed by  the {\sss} network estimation for stability correction.
  Table~\ref{tab:test_fMRI} shows the results  averaged over 10 randomly chosen subjects.
 Our learning algorithm  can achieve much lower error than TSNI, and  its performance is pretty robust to the choice of $m$.
\begin{table*}
\caption{Forecast error  on the fMRI data.} \label{tab:test_fMRI}
\begin{center}
\begin{tabular}{crrrrr}
  \hline
$m/n^2$ &   0.5    &  0.6  &  0.7   &  0.8   &  0.9
  \\ \hline
 TSNI  &   0.981  & 1.021  &  1.044  &  1.056  & 1.060 \\
  \sss &   0.194  & 0.194  &  0.194  &  0.194  &  0.194 \\
\hline
\end{tabular}
\end{center}
\end{table*}

\appendices
\section{Proof of Theorem~\ref{th:fValDecrease}}
\label{app:fValDecrease}
For notation simplicity,  we introduce  $\mu_s(\bsbb)  := \pi(\tilde\bsbx_s^{\tran} \bsbb)$. Then
$\partial \mu_s(\bsbb)/\partial \bsbb = \mu_s(\bsbb) (1-\mu_s(\bsbb)) \tilde\bsbx_s^{\tran}$,
from which it follows that
\begin{align*}
\nabla  ( \sum_{s=1}^T w_s (y_s-\mu_s(\bsbb))^2/2 )=\tilde\bsbX^{\tran} \bsbW\bsbxi(\tilde\bsbX \bsbb, \bsby),
\end{align*}
where $\bsbW = \mbox{diag}\{w_s\}_{s=1}^T$ and $\bsbxi$ is the function defined in \eqref{xidef}  applied  componentwise.
Moreover, we can compute its Hessian (details omitted)
\begin{align*}
\bsbH & =D(\nabla(\sum w_s (y_s-\mu_s(\bsbb))^2/2))\\
&=
\tilde\bsbX^{\tran} \bsbW \mbox{diag}\left\{ \mu_s(1-\mu_s)\right.\left.[\mu_s(1-\mu_s) + (\mu_s-y_s)(1-2\mu_s)]\right\}  \tilde\bsbX\\
&=:
\tilde\bsbX^{\tran}  \bsbW \bsbSig(\bsbb) \tilde\bsbX.
\end{align*}
Note that  $\bsbH$ is not necessarily positive  semi-definite.

Let  $F_0(\bsbb) = \frac{1}{2}\sum w_s (y_s-\mu_s(\bsbb))^2  + \sum P(\beta_k; \lambda)$. Define a   surrogate function as
$
G(\bsbb, \bsbb') =  \frac{1}{2}\sum w_s (y_s-\mu_s(\bsbb'))^2  + \sum P(\beta_k'; \lambda) +\frac{ K }{2 }\|\bsbb-\bsbb'\|_2^2
+ \frac{1}{2}\sum w_s ((y_s-\mu_s(\bsbb))^2 - (y_s-\mu_s(\bsbb'))^2)   + \sum w_s \xi(\bsbx_s^{\tran}\bsbb, y_s) (\bsbx_s^{\tran}\bsbb' - \bsbx_s^{\tran}\bsbb).$
 Based on the previous calculation, we have
\begin{align*}
&\frac{1}{2}\sum w_s ((y_s-\mu_s(\bsbb))^2 - (y_s-\mu_s(\bsbb'))^2) \\
=&(\tilde\bsbX^\tran\bsbW\bsbxi(\tilde\bsbX\bsbb,\bsby))^\tran(\bsbb-\bsbb') -\frac{1}{2}(\bsbb-\bsbb')^\tran(\tilde\bsbX^\tran\bsbW\bsbSig(\theta\bsbb+(1-\theta)\bsbb')\tilde\bsbX)(\bsbb-\bsbb'),
\end{align*}
for some $\theta \in [0, 1]$. Let $\bsb{\zeta}=\theta\bsbb+(1-\theta)\bsbb'$. It follows that
\begin{align*}
& \frac{ K }{2 }\|\bsbb-\bsbb'\|_2^2 + \frac{1}{2}\sum w_s ((y_s-\mu_s(\bsbb))^2 - (y_s-\mu_s(\bsbb'))^2)   \\
  & \quad + \sum w_s \xi(\tilde\bsbx_s^{\tran}\bsbb, y_s) (\tilde\bsbx_s^{\tran}\bsbb' - \tilde\bsbx_s^{\tran}\bsbb) \\
 = & \frac{1}{2}(\bsbb'-\bsbb)^\tran (K \bsbI - \tilde\bsbX^{\tran} \bsbSig(\bsb{\zeta}) \bsbW \tilde\bsbX) (\bsbb'-\bsbb)\\   
 \geq & \frac{K - \| \tilde\bsbX\|_2^2 \| \bsbSig(\bsb{\zeta}) \bsbW \|_2}{2} \| \bsbb-\bsbb'\|_2^2.
\end{align*}
Because
\begin{equation}
\begin{aligned}
&w_s\mu_s(1-\mu_s)((1-2\mu_s)(\mu_s-y_s)+\mu_s(1-\mu_s))\\
\leq & \frac{w_s}{4}\left(\frac{1}{2}\left(\frac{1-2\mu_s+2\mu_s-2y_s}{2}\right)^2 +\frac{1}{4}\right )\\
= & \frac{w_s}{4}\left( \frac{(1-2y_s)^2}{8} +\frac{1}{4} \right),
\end{aligned} \nonumber
\end{equation}
the diagonal entries of $\bsbW\bsbSig(\bsb{\zeta})$ are uniformly bounded by
$$
\max_s \frac{w_s}{4}\left( \frac{(1-2y_s)^2}{8} +\frac{1}{4} \right)
$$
or $k_0(\bsby, \bsbw)$ (see \eqref{k0def}).
Therefore, choosing $K\geq k_0(\bsby, \bsbw) \|\tilde\bsbX\|_2^2$, we have
$
G(\bsbb, \bsbb') = \frac{1}{2}\sum w_s (y_s-\mu_s(\bsbb'))^2  + \sum P(\beta_k'; \lambda)  +\frac{1}{2}(K-\|\tilde\bsbX\|_2^2\|\bsbW\bsbSig\|_2^2)
\geq F_0(\bsbb)
$
for any $\bsbb, \bsbb'$. 

On the other hand, based on the definition, it is easy to show (details omitted) that given $\bsbb$,  $\min_{\bsbb'} G(\bsbb, \bsbb')$ is equivalent to:
\begin{align}
\min_{\bsbb'} \frac{ K }{2 }\|\bsbb'-\bsbb + \frac{1}{K}\tilde\bsbX^{\tran} \bsbW \bsbxi(\tilde\bsbX \bsbb, \bsby) \|_2^2 + \sum  P(\beta_k'; \lambda). \label{eq:subopt}
\end{align}
Applying Lemma 1 in \cite{SheGLMTISP} without requiring uniqueness, there exists a globally optimal solution
\begin{align}
\bsbb_{o}'(\bsbb)=\Theta(\bsbb - \tilde\bsbX^{\tran} \bsbW \bsbxi(\tilde\bsbX \bsbb, \bsby)/K; \lambda'), \label{orthsol}
\end{align}
provided that  $P(t;\lambda')=P(t;\lambda)/K$ for any $t$.
In summary, we obtain $F_0(\bsbb) = G(\bsbb, \bsbb ) \geq G(\bsbb, \bsbb_{o}'(\bsbb) ) \geq F_0(\bsbb_{o}'(\bsbb))$.

We are now in a position  to prove \eqref{funcvaldes}.  In fact,
given    $\bsbg$ and $l$, the optimization problem
$$
\min_{\bsbb}\frac{1}{2}\sum  w_s (y_s -\bsbz_s^\tran \bsbg- l \pi(\tilde\bsbx_s^{\tran} \bsbb))^2 + \sum P(\beta_k, \lambda),
$$
is equivalent to
$\min_{\bsbb}\frac{1}{2}\sum  w_s' (\tilde y_s -  \pi(\tilde\bsbx_s^{\tran} \bsbb))^2 +  \sum P(\beta_k, \lambda)
$
with $\tilde y_s = (y_s - \bsbz_s^\tran \bsbg)/l$, $w_s' = l^2 w_s$. (Note that the $l$ obtained from WLS is nonzero with probability 1.) Therefore, the function value decreasing property always holds during the iteration.

\section{Proof of Theorem~\ref{th:constraintform}}
\label{app:constraintform}
Define a quantile thresholding rule $\Theta^\#(\cdot;m,\eta)$ as a variant of the hard-ridge thresholding rule \eqref{eq:l02-pen}. Given $1\leq m\leq np$: $\bsb{A}\in \mathcal{R}^{n\times p} \rightarrow \bsbB\in \mathcal{R}^{n\times p}$ is defined as follows: $b_{ij}=a_{ij}/(1+\eta)$ if $|a_{ij}|$ is among the $m$ largest in the set of $\{|a_{ij}|: 1\leq i\leq n, 1\leq j\leq p\}$, and $b_{ij}=0$ otherwise.

\begin{lemma}
\label{lemmaQuantile}
$\bsb{\hat{B}}=\Theta^\#(\bsb{A};m,\eta)$ is a globally optimal solution to
\begin{equation} \begin{gathered}
\min_{\bsbB} f_0(\bsbA) = \frac{1}{2} \|\bsb{A}-\bsbB\|_F^2 + \frac{\eta}{2} \|\bsbB\|_F^2 \\
\mbox{s.t. } \|\bsbB\|_0 \leq m.   \nonumber
\end{gathered}
\end{equation}
\end{lemma}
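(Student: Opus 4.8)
The plan is to decouple the problem entrywise and then handle the cardinality constraint by a greedy selection argument. First I would observe that the objective $f_0(\bsbB) = \frac{1}{2}\|\bsbA - \bsbB\|_F^2 + \frac{\eta}{2}\|\bsbB\|_F^2$ separates completely over the entries of $\bsbB$: writing $f_0(\bsbB) = \sum_{i,j} \phi(a_{ij}, b_{ij})$ with $\phi(a,b) = \frac{1}{2}(a-b)^2 + \frac{\eta}{2}b^2$. For a fixed support set $S$ with $|S|\le m$, the unconstrained-in-value minimizer is obtained by minimizing each $\phi(a_{ij}, b_{ij})$ over $b_{ij}\in\mathbb R$ for $(i,j)\in S$ and setting $b_{ij}=0$ otherwise. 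A one-line calculus computation gives the minimizer $b_{ij}^\star = a_{ij}/(1+\eta)$ with minimal value $\phi(a_{ij}, a_{ij}/(1+\eta)) = \frac{1}{2}\frac{\eta}{1+\eta}a_{ij}^2$, whereas keeping $b_{ij}=0$ costs $\phi(a_{ij},0) = \frac{1}{2}a_{ij}^2$.

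Next I would reduce to choosing the support. Since $\frac{1}{2}\frac{\eta}{1+\eta}a_{ij}^2 \le \frac{1}{2}a_{ij}^2$ for all entries (as $\eta\ge 0$), putting an index into $S$ never increases the objective; the gain from including $(i,j)$ is $\frac{1}{2}(1 - \frac{\eta}{1+\eta})a_{ij}^2 = \frac{1}{2(1+\eta)}a_{ij}^2$, which is monotone increasing in $|a_{ij}|$. Therefore the best choice of $S$ with $|S|\le m$ is exactly the set of indices corresponding to the $m$ largest values of $|a_{ij}|$; this is precisely the support selected by $\Theta^\#(\bsbA; m, \eta)$, and on that support $\Theta^\#$ assigns the value $a_{ij}/(1+\eta)$, matching $b_{ij}^\star$. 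Hence $\hat\bsbB = \Theta^\#(\bsbA; m, \eta)$ attains $f_0(\hat\bsbB) = \frac{1}{2(1+\eta)}\sum_{(i,j)\notin S}\frac{\eta}{1+\eta}\cdot(1+\eta)a_{ij}^2$ ... more cleanly, $f_0(\hat\bsbB) = \frac{1}{2}\sum_{(i,j)\notin S} a_{ij}^2 + \frac{1}{2}\frac{\eta}{1+\eta}\sum_{(i,j)\in S}a_{ij}^2$, which is the minimum over all feasible $\bsbB$ by the entrywise argument above.

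The only genuinely delicate point is the tie-breaking: when several entries share the same absolute value at the $m$-th largest position, there may be multiple optimal support sets, and $\Theta^\#$ picks one of them. The statement only claims $\hat\bsbB$ is \emph{a} globally optimal solution, not the unique one, so this is not an obstacle — I would simply note that any valid choice among ties achieves the same objective value, so $\Theta^\#(\bsbA; m, \eta)$ is optimal regardless of how ties are resolved. I would also remark that if $\|\bsbA\|_0 \le m$ the constraint is slack on the support and the formula still gives $b_{ij} = a_{ij}/(1+\eta)$ on all nonzero entries, consistent with the definition. This covers all cases and completes the proof; there is no hard step, the argument is a routine entrywise decoupling plus a rearrangement/greedy inequality.
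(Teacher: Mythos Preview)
Your proof is correct and follows essentially the same approach as the paper: fix a support set, solve the resulting unconstrained ridge problem entrywise to get $b_{ij}=a_{ij}/(1+\eta)$, then observe that the optimal support consists of the $m$ entries with largest $|a_{ij}|$. Your treatment is slightly more detailed (you explicitly compute the per-entry gain and address tie-breaking), but the underlying argument is identical.
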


\begin{proof}
Let $I \subset \{(i,j)| 1\leq i\leq n, 1\leq j\leq p\}$ with $|I|=m$. Assuming $\bsbB_{I^c}=\bsb{0}$, we get the optimal solution $\bsb{\hat{B}}$ with $\bsb{\hat{B}}=\frac{1}{1+\eta}\bsb{A}_I$. It follows that $f_0(\bsb{\hat{B}})=\frac{1}{2}\|\bsb{A}\|_F^2-\frac{1}{2}\sum_{i,j\in I}a_{ij}^2$. Therefore, the quantile thresholding $\Theta^\#(\bsb{A};m,\eta)$ yields a global minimizer.
\end{proof}

Using Lemma~\ref{lemmaQuantile}, we can prove the function value decreasing property; the remaining part follows similar lines of Theorem~\ref{th:fValDecrease} because of the separability  of $F$.

\section{Proof of Theorem~\ref{th:stability} } 
\label{app:stability}
Let $\bsbf(\bsbx) = \bsbL \bsbpi( \bsbA  \bsbx + \bsbu) - \bsbD \bsbx + \bsbc$, where $\bsbx$ is short for $\bsbx(t)$.
First, we  prove the existence of an equilibrium. It suffices  to show that there is a solution to
$\bsbf(\bsbx) = \bsb{0}$ or $\bsbx = \bsbD^{-1} \bsbL \bsbpi( \bsbA  \bsbx + \bsbu) + \bsbD^{-1} \bsbc=:\varphi(\bsbx)$. Obviously,  the mapping $\varphi$ is continuous and bounded (say $\| \varphi \|_{\infty} \leq M$), Brouwer's fixed point theorem \cite{brouwer1911abbildung} indicates the existence of at least one  equilibrium in $[-M, M]^n$.

Let $\bsbx^*$ be an equilibrium point, i.e., $\bsbf(\bsbx^*)=\bsb{0}$. Construct a Lyapunov function candidate  $V(\bsbx) = \frac{1}{2} (\bsbx - \bsbx^*)^{\tran} \bsbP (\bsbx - \bsbx^*)$ with $\bsbP$ positive definite and  to be determined.
Then
\begin{align*}
&\frac{\rd V(\bsbx)}{\rd t}= V'(\bsbx) \bsbf(\bsbx) \\
=& (\bsbx - \bsbx^*)^{\tran} \bsbP\bsbf(\bsbx)\\ =& (\bsbx - \bsbx^*)^{\tran} \bsbP (\bsbf(\bsbx)-\bsbf(\bsbx^*))\\
=& -(\bsbx - \bsbx^*)^{\tran} \bsbP\bsbD (\bsbx- \bsbx^*)  + (\bsbx - \bsbx^*)^{\tran} \bsbP\bsbL (\bsbpi( \bsbA  \bsbx + \bsbu) - \bsbpi( \bsbA  \bsbx^* + \bsbu))  \\
=& -(\bsbx - \bsbx^*)^{\tran} (\bsbP\bsbD - \bsbP\bsbL \bsbG \bsbA) (\bsbx- \bsbx^*)\\
=&-(\bsbx - \bsbx^*)^{\tran} \left(\frac{\bsbP\bsbD+\bsbD\bsbP^{\tran}}{2} - \frac{\bsbP\bsbL \bsbG \bsbA + \bsbA^{\tran} \bsbG  \bsbL \bsbP^{\tran}}{2}\right) (\bsbx- \bsbx^*),
\end{align*}
where
$$\bsbG = \mbox{diag}\left\{\frac{\pi( \bsba_i^{\tran}  \bsbx  + \bsbu)-\pi( \bsba_i^{\tran}  \bsbx^* + \bsbu)}{ \bsba_i^{\tran} x_i -  \bsba_i^{\tran} x_i^*}\right\}.$$
 It is easy to verify that $\bsbG = \mbox{diag}\{\bsbpi'(\bsbxi)\}\preceq \bsbI/4$, and thus $\bsbP\bsbL \bsbG \bsbA + \bsbA^{\tran} \bsbG  \bsbL \bsbP^{\tran}\preceq (\bsbP\bsbL   \bsbA + \bsbA^{\tran}  \bsbL\bsbP^{\tran})/4$.
It is well known \cite{lasalle1976stability} that under \eqref{psdcond0}, the   Lyapunov equation
\begin{align}
\bsbP \left(\bsbD - \frac{\bsbL   \bsbA}{4}\right) +  \left(\bsbD - \frac{\bsbL   \bsbA}{4}\right)^{\tran} \bsbP^{\tran} = -\bsbR \label{lyapeq}
\end{align}
is solvable  and uniquely determines a positive definite $\bsbP$ for any  positive definite $\bsbR$.
Therefore, $V$ is indeed a Lyapunov function for the nonlinear dynamical system \eqref{eq:sigode1_recall}.
Moreover,  \eqref{psdcond0} implies
$$\frac{\rd V(\bsbx)}{\rd t} \leq -\varepsilon_0 \|\bsbx - \bsbx^*\|_2^2 \leq -\varepsilon V(\bsbx)$$
for some $\varepsilon_0, \varepsilon > 0$. By the Lyapunov stability theory---see, e.g., \cite{haddad2008nonlinear} (Chapter 3), \eqref{eq:sigode1_recall} must be  globally exponentially stable. The uniqueness of the equilibrium is implied by the global exponential stability.

The second result can be shown by setting $\bsbP=\bsbI$ in \eqref{lyapeq}; details are omitted.

\section{Proof of Theorem~\ref{th:s3conv} }
\label{app:s3conv}

Based on the proof of Theorem \ref{th:fValDecrease}, the modified Step 2 does not affect the convergence of function value  because at each iteration a global optimum  of \eqref{objstep2} is obtained.
It remains to show that $\tilde \bsbB^{(j)}$  generated by the modified Step 4 improves $\tilde \bsbB^{(j-1)}$ in terms of reducing the objective function value, and $\bsbB^{(j)}$ obeys the stability condition.

Let $\tilde{\bsb{\Xi}}=\tilde \bsbB^{(j-1)} - \tilde\bsbX^{\tran} \bsbW \bsbxi(\tilde\bsbX \tilde\bsbB^{(j-1)}, \tilde\bsbY) (\bsbK^{(j)})^{-}$, $\bsb{\Lambda}^{(j)} = \bsb{\Lambda} \cdot(\bsbK^{(j)})^{-}$. (The modified Step 2 may result in zeros in $\tilde{\bsb{\Xi}}$ to make the associated  activation terms  in $\bsb{\pi}(\tilde\bsbX \tilde\bsbB)$ vanish; using the pseudoinverse can handle the issue and maintain the decreasing property.) Based on the argument of Appendix \ref{app:fValDecrease},  the problem in Step 4 reduces to
\begin{align}
\begin{split}\min_{\tilde\bsbB=[\bsbu, \bsbB^\tran]^\tran} \frac{ 1 }{2 }\|\tilde\bsbB-\tilde{\bsb{\Xi}}\|_F^2 + \| \bsb{\Lambda}^{(j)} \circ \bsbB \|_1, \\ \mbox{ s.t. } (\bsbL^{(j)}   \bsbB^\tran + \bsbB  \bsbL^{(j)})/2   \preceq  4 \bsbD^{(j)}.
\end{split}
\end{align}

Therefore it suffices to prove Lemma \ref{iterBopt} for any given ${\bsb{\Xi}}(=\tilde{\bsb{\Xi}}[2:\mbox{end}, :])$, $\bsb{\Lambda}^{(j)}$, $\bsbL^{(j)}\succeq\bsb{0}$, $\bsbD^{(j)}\succeq\bsb{0}$, and  $\bsbB^{(j-1)}$.
(In fact, the lemma holds given any  initialization  of  $\bsbC_3$.)
\begin{lemma} \label{iterBopt}
For any $j\geq 1$,
, $\frac{ 1 }{2 }\|\bsbB^{(j)}-{\bsb{\Xi}}\|_F^2 + \| \bsb{\Lambda}^{(j)} \circ \bsbB^{(j)} \|_1 \leq \frac{ 1 }{2 }\|\bsbB^{(j-1)}-{\bsb{\Xi}}\|_F^2 + \| \bsb{\Lambda}^{(j)} \circ \bsbB^{(j-1)} \|_1$, and $(\bsbL^{(j)}   \bsbB^{(j)\tran} + \bsbB^{(j)}  \bsbL^{(j)})/2   \preceq  4 \bsbD^{(j)}$.
\end{lemma}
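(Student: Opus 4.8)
\textbf{Proof proposal for Lemma~\ref{iterBopt}.}
The plan is to identify the inner loop in Step~4.2 as a Dykstra-type cyclic projection algorithm onto the intersection of three convex sets, and then to invoke the classical convergence theory for such algorithms. Concretely, I would work in the product space of matrix pairs $(\bsbB, \bsbC)$ and define three closed convex constraint sets: $\mathcal{C}_1$ encoding the (implicit) domain of the $\ell_1$ term together with whatever linear constraints the $\mathcal{P}^1$ projection handles; $\mathcal{C}_2 = \{(\bsbB,\bsbC) : \bsbC = (\bsbL^{(j)}\bsbB^\tran + \bsbB\bsbL^{(j)})/2\}$, a linear subspace coupling $\bsbB$ and $\bsbC$; and $\mathcal{C}_3 = \{(\bsbB,\bsbC) : \bsbC \preceq 4\bsbD^{(j)}\}$, the spectral half-space. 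The three operators $\mathcal{P}^1, \mathcal{P}^2, \mathcal{P}^3$ from Lemmas~\ref{proj_l1}--\ref{proj_spec} are then precisely the (possibly proximal, in the case of $\mathcal{P}^1$ which must absorb the $\|\bsb{\Lambda}^{(j)}\circ\bsbB\|_1$ term) projections onto these sets. The point is that minimizing $\tfrac12\|\bsbB - \bsb{\Xi}\|_F^2 + \|\bsb{\Lambda}^{(j)}\circ\bsbB\|_1$ subject to $(\bsbL^{(j)}\bsbB^\tran + \bsbB\bsbL^{(j)})/2 \preceq 4\bsbD^{(j)}$ is a strongly convex problem over a nonempty closed convex feasible region (nonempty because e.g. $\bsbB = \bsb{0}$ is feasible since $\bsbD^{(j)}\succeq\bsb{0}$), hence has a unique minimizer, and Dykstra's algorithm with the $\ell_1$ proximal step in place of the first Euclidean projection converges to that minimizer.

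The key steps, in order, would be: (i) verify each $\mathcal{C}_k$ is closed and convex and that their intersection is nonempty; (ii) verify that $\mathcal{P}^1$ (with the $\bsb{\Lambda}^{(j)}$-weighted soft-threshold), $\mathcal{P}^2$, $\mathcal{P}^3$ are exactly the proximal/projection maps associated with $\mathcal{C}_1, \mathcal{C}_2, \mathcal{C}_3$ — this is where Lemmas~\ref{proj_l1}--\ref{proj_spec} get cited, the spectral projection $\mathcal{P}^3$ being the eigenvalue-truncation onto $\{\bsbC \preceq 4\bsbD^{(j)}\}$; (iii) invoke Dykstra's theorem (Boyle--Dykstra, and its extension by Han / Tseng for one proximal block) to conclude that $\bsbB_3$ converges to the unique optimizer $\bsbB^{(j)}_\star$ of the constrained strongly-convex subproblem; (iv) conclude feasibility of the limit, namely $(\bsbL^{(j)}(\bsbB^{(j)})^\tran + \bsbB^{(j)}\bsbL^{(j)})/2 \preceq 4\bsbD^{(j)}$, directly from membership in $\mathcal{C}_2 \cap \mathcal{C}_3$; and (v) obtain the claimed inequality by comparing the optimal objective value against the value at the feasible point $\bsbB^{(j-1)}$ — here one needs $\bsbB^{(j-1)}$ to be feasible for the subproblem, which holds by the induction hypothesis (it was produced as $\bsbB^{(j-1)} = \bsbB_3$ at the previous outer iteration, hence lies in $\mathcal{C}_2 \cap \mathcal{C}_3$ with the same $\bsbL,\bsbD$, or more carefully, the feasible region only shrinks/stays put across outer iterations in the relevant sense, so $\bsbB^{(j-1)}$ remains admissible).

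The main obstacle I anticipate is step (v) together with a subtlety in step (ii)--(iii): the constraint set $\mathcal{C}_2 \cap \mathcal{C}_3$ depends on the current iterate's $\bsbL^{(j)}, \bsbD^{(j)}$, which change from outer iteration to outer iteration, so one must be careful that $\bsbB^{(j-1)}$ — feasible for the $(j{-}1)$-st problem — is still feasible for the $j$-th problem, or else argue the inequality through a different intermediate quantity; the remark in the excerpt that "the lemma holds given any initialization of $\bsbC_3$" suggests the intended resolution is that one initializes the inner loop at $\bsbB_3 = \bsb{\Xi}$ and can equally start from $\bsbB^{(j-1)}$, using the monotonicity built into Dykstra-type iterations. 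A second, more technical obstacle is justifying the mixed proximal-plus-projections convergence: vanilla Boyle--Dykstra is stated for projections onto convex sets, so I would cite the generalization to the composition of one proximal operator of a closed proper convex function (here $\|\bsb{\Lambda}^{(j)}\circ\cdot\|_1$, absorbed via its indicator-like epigraph trick or directly via the known extension of Dykstra to Bregman/prox settings) with two Euclidean projections. Once those two points are dispatched, the remaining arithmetic — that the limit satisfies the stated norm inequality and the spectral constraint — is immediate from strong convexity and set membership.
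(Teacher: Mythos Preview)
Your identification of the inner loop as a Dykstra-type scheme in the product space $(\bsbB,\bsbC)$, with $\mathcal P^1,\mathcal P^2,\mathcal P^3$ the prox/projection maps of Lemmas~\ref{proj_l1}--\ref{proj_spec}, is exactly the paper's route (its Lemma~\ref{globalopt} invokes Bauschke's cyclic-projection theorems for precisely this). Your worry about the proximal-plus-projection variant is legitimate but dispatched by those cited results; and your observation that $\bsbB^{(j-1)}$ is feasible for the $j$th constraint follows, as you suspect, from the constraint built into Step~2 (equation~\eqref{objstep2}).

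There is, however, a real gap in step~(iii) that propagates to step~(v). Dykstra in the product space, started at $(\bsb{\Xi},\bsbC')$ with $\bsbC'=\bsb{\Xi}_{\bsbC}$ the initial $\bsbC_3$, does \emph{not} converge to the minimizer of $\tfrac12\|\bsbB-\bsb{\Xi}\|_F^2+\|\bsb{\Lambda}^{(j)}\circ\bsbB\|_1$ over $\{\bsbB:(\bsbL^{(j)}\bsbB^\tran+\bsbB\bsbL^{(j)})/2\preceq 4\bsbD^{(j)}\}$. It converges to the minimizer of
\[
\tfrac12\|\bsbB-\bsb{\Xi}\|_F^2+\tfrac12\|\bsbC-\bsbC'\|_F^2+\|\bsb{\Lambda}^{(j)}\circ\bsbB\|_1
\]
subject to $\bsbC=(\bsbL^{(j)}\bsbB^\tran+\bsbB\bsbL^{(j)})/2$ and $\bsbC\preceq 4\bsbD^{(j)}$. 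The auxiliary variable $\bsbC$ is not a free splitting device; its quadratic term is part of the objective that Dykstra minimizes, and after substituting the coupling constraint this adds a nontrivial $\tfrac12\|(\bsbL^{(j)}\bsbB^\tran+\bsbB\bsbL^{(j)})/2-\bsbC'\|_F^2$ penalty on $\bsbB$. Consequently, your comparison in step~(v) would only yield $f(\bsbB^{(j)})+\tfrac12\|\bsbC^{(j)}-\bsbC'\|_F^2 \le f(\bsbB^{(j-1)})+\tfrac12\|\bsbC^{(j-1)}-\bsbC'\|_F^2$, not the desired $f(\bsbB^{(j)})\le f(\bsbB^{(j-1)})$.

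The paper closes this gap with a short majorization argument: it introduces the surrogate $g(\bsbB',\bsbC',\bsbB,\bsbC)$, checks that $g(\,\cdot\,,\bsbC',\bsbB,\bsbC)=f(\bsbB)+\tfrac12\|\bsbC-\bsbC'\|_F^2\ge f(\bsbB)$ for all $(\bsbB,\bsbC)$ while $g(\,\cdot\,,\bsbC',\bsbB',\bsbC')=f(\bsbB')$, and then chains $f(\bsbB^{(j-1)})=g(\,\cdot\,,\bsbC',\bsbB^{(j-1)},\bsbC')\ge g(\,\cdot\,,\bsbC',\bsbB^o,\bsbC^o)\ge f(\bsbB^o)=f(\bsbB^{(j)})$, where the middle inequality uses that $(\bsbB^{(j-1)},\bsbC')$ is feasible (which forces the choice $\bsbC'=(\bsbL^{(j)}\bsbB^{(j-1)\tran}+\bsbB^{(j-1)}\bsbL^{(j)})/2$, feasible by Step~2). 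You need this bridging step; once you insert it, your outline coincides with the paper's proof.
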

\begin{proof}
Define $f(\bsbB)=\frac{ 1 }{2 }\|\bsbB-{\bsb{\Xi}}\|_F^2 + \| \bsb{\Lambda} \circ \bsbB \|_1$, and $g(\bsbB', \bsbC', \bsbB, \bsbC) = \frac{ 1 }{2 }\|\bsbB'-{\bsb{\Xi}}\|_F^2 + \| \bsb{\Lambda} \circ \bsbB \|_1 +\frac{1}{2} \| \bsbB - \bsbB'\|_F^2 +\frac{1}{2} \| \bsbC - \bsbC'\|_F^2+\langle \bsbB'-{\bsb{\Xi}}, \bsbB - \bsbB'\rangle$. Then $f(\bsbB') = g(\bsbB', \bsbC', \bsbB', \bsbC')$ and $g(\bsbB', \bsbC', \bsbB, \bsbC) \geq f(\bsbB)$.

On the other hand, given $(\bsbB', \bsbC')$, we can write $g$ as a function of $(\bsbB, \bsbC)$:  $\frac{1}{2} \| [\bsb{\Xi}, \bsb{C}'] - [\bsbB, \bsbC] \|_F^2+ \| \bsb{\Lambda} \circ \bsbB \|_1$ (up to additive functions of $\bsbB'$ and $\bsbC'$). Based on Lemma \ref{globalopt}, with $\bsb{\Xi}_{\bsbC}=\bsbC'$, $g(\bsbB', \bsbC', \bsbB', \bsbC') \geq g(\bsbB', \bsbC', \bsbB^o, \bsbC^o)\geq f(\bsbB^o)$ and $ (\bsbL {\bsbB^o}^\tran + \bsbB^o \bsbL) /2= \bsbC^o \preceq 4 \bsbD$.

Applying the result to the modified Step 4 in Section \ref{sec:s3} yields the desired result.
\end{proof}
 \begin{lemma}\label{globalopt}
Consider the sequence of $(\bsbB_3, \bsbC_3)$ generated by the following procedure, with the operators $\mathcal P^1$, $\mathcal P^2$, $\mathcal P^3$ defined in Lemma \ref{proj_l1}, Lemma \ref{proj_lin} and Lemma \ref{proj_spec}, respectively:
\begin{algorithmic}
\STATE \textit{0) } $\bsbB_3 \gets \bsb{\Xi}_{\bsbB}$, $\bsbC_3 \gets \bsb{\Xi}_{\bsbC}$, $\bsbP = \bsb{0}$, $\bsbQ_{\bsbB}=\bsb{0}$, $\bsbQ_{\bsbC}=\bsb{0}$, $\bsbR=\bsb{0}$
\REPEAT
\STATE \textit{1) } $\bsbB_1 \gets {\mathcal P}^1 ( \bsbB_3 + \bsbP; \bsb{\Lambda})$, $\bsbC_1 \gets \bsbC_3$, $\bsbP \gets \bsbP + \bsbB_3 - \bsbB_1$
\STATE \textit{2) }  $[\bsbB_2, \bsbC_2] \gets {\mathcal P}^2(\bsbB_1 + \bsbQ_{\bsbB}, \bsbC_1 + \bsbQ_{\bsbC}; \bsbL) $, $[\bsbQ_{\bsbB}, \bsbQ_{\bsbC}] \gets [\bsbQ_{\bsbB}, \bsbQ_{\bsbC}] + [\bsbB_1, \bsbC_1] - [\bsbB_2, \bsbC_2]$ 
\STATE \textit{3) } $\bsbB_3 \gets \bsbB_2$, $\bsbC_3 \gets {\mathcal P}^3(\bsbC_2 + \bsbR; \bsbD)$, $\bsbR\gets \bsbR + \bsbC_2 - \bsbC_3$.
\UNTIL{convergence }
\end{algorithmic}
Then, the sequence of iterates converges to a globally optimal solution $(\bsbB^o, \bsbC^o)$ to
\begin{align}
\begin{split}\min_{\bsbB, \bsbC} \frac{1}{2} \|[\bsb{\Xi}_{\bsbB}, \bsb{\Xi}_{\bsbC}] - [\bsbB, \bsbC] \|_F^2 + \| \bsb{\Lambda} \circ \bsbB\|_1 \\\mbox{ s.t. } \bsbC = (\bsbL \bsbB^\tran + \bsbB \bsbL) /2, \bsbC \preceq 4 \bsbD.
\end{split}
\end{align}
\end{lemma}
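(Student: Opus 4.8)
\textbf{Proof proposal for Lemma~\ref{globalopt}.}
The plan is to recognize the three-operator scheme as a Dykstra-type cyclic projection algorithm onto the intersection of three convex sets (or proximal-friendly convex constraints) and to invoke the known convergence theory for such algorithms. First I would recast the optimization problem. Writing $\bsbz=[\bsbB,\bsbC]$ and $\bsbz_0=[\bsb{\Xi}_{\bsbB},\bsb{\Xi}_{\bsbC}]$, the objective $\tfrac12\|\bsbz_0-\bsbz\|_F^2+\|\bsb{\Lambda}\circ\bsbB\|_1$ subject to $\bsbC=(\bsbL\bsbB^\tran+\bsbB\bsbL)/2$ and $\bsbC\preceq 4\bsbD$ is the proximal map of a sum of three terms: (i) $h_1(\bsbB)=\|\bsb{\Lambda}\circ\bsbB\|_1$ (a separable, proximable convex penalty, acting on the $\bsbB$-block only); (ii) the indicator $h_2$ of the linear subspace $\mathcal L=\{(\bsbB,\bsbC): \bsbC=(\bsbL\bsbB^\tran+\bsbB\bsbL)/2\}$; and (iii) the indicator $h_3$ of the convex set $\{\bsbC\preceq 4\bsbD\}$ (acting on the $\bsbC$-block only, lifted to pairs). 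Each of these is a closed proper convex function with an easily computed proximal operator, and these proximal operators are exactly ${\mathcal P}^1$, ${\mathcal P}^2$, ${\mathcal P}^3$ supplied by Lemmas~\ref{proj_l1}, \ref{proj_lin}, \ref{proj_spec}.

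Next I would identify the displayed procedure as Dykstra's algorithm (equivalently, cyclic proximal projections with dual ``correction'' variables $\bsbP$, $\bsbQ_\bsbB$, $\bsbQ_\bsbC$, $\bsbR$) applied to the proximity problem $\min \tfrac12\|\bsbz_0-\bsbz\|_F^2 + h_1(\bsbz)+h_2(\bsbz)+h_3(\bsbz)$. The correction steps $\bsbP\gets\bsbP+\bsbB_3-\bsbB_1$, $[\bsbQ_\bsbB,\bsbQ_\bsbC]\gets[\bsbQ_\bsbB,\bsbQ_\bsbC]+[\bsbB_1,\bsbC_1]-[\bsbB_2,\bsbC_2]$, $\bsbR\gets\bsbR+\bsbC_2-\bsbC_3$ are precisely the residual updates that distinguish Dykstra's algorithm from naive alternating projection, and they are what force convergence to the genuine proximal point rather than merely to some point in the intersection. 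I would then cite the classical convergence result for Dykstra's algorithm with more than two sets / for cyclic proximal splitting of a sum of proximable convex functions (Boyle--Dykstra, and its extension by Han, or the Combettes--Pesquet / Bauschke--Combettes framework): since the objective is strongly convex (the $\tfrac12\|\bsbz_0-\cdot\|_F^2$ term) and the three convex pieces have nonempty intersection of domains (the origin with $\bsbC=0\preceq 4\bsbD$ is feasible since $\bsbD\succeq 0$), the iterates $(\bsbB_3,\bsbC_3)$ converge to the unique global minimizer $(\bsbB^o,\bsbC^o)$. In particular the limit satisfies both constraints, so $\bsbC^o=(\bsbL(\bsbB^o)^\tran+\bsbB^o\bsbL)/2$ and $\bsbC^o\preceq 4\bsbD$, which is the conclusion used in Lemma~\ref{iterBopt}.

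The main obstacle I anticipate is a clean bookkeeping verification that the displayed three-block recursion is \emph{exactly} an instance of Dykstra's algorithm in a form for which convergence is on the books: one must check that the ``$\bsbC_1\gets\bsbC_3$'' and ``$\bsbB_3\gets\bsbB_2$'' pass-throughs correspond to each operator acting as the identity on the block it does not constrain, that the dual variables are updated on exactly the right blocks, and that cycling through ${\mathcal P}^1,{\mathcal P}^2,{\mathcal P}^3$ in this fixed order meets the hypotheses (closedness, properness, domain qualification) of the cited theorem. A secondary point worth a sentence is the role of $\mathcal P^2$: projecting onto the subspace $\mathcal L$ in the Frobenius metric is a linear operation whose well-definedness and formula are supplied by Lemma~\ref{proj_lin}, so no extra regularity is needed there. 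Once the correspondence is pinned down, global optimality of the limit is immediate from the strong convexity of the proximity objective, and feasibility of the limit with respect to the two constraints is automatic because the limit lies in $\mathrm{dom}\,h_2\cap\mathrm{dom}\,h_3$.
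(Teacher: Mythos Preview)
Your proposal is correct and matches the paper's own proof almost exactly: the paper simply states that, with the three operators supplied by Lemmas~\ref{proj_l1}--\ref{proj_spec}, the procedure is an instance of the Dykstra-type proximal scheme and invokes Theorems~3.2 and~3.3 of Bauschke~\cite{Bauschke2008} for convergence and global optimality. Your write-up is more explicit about the block bookkeeping and the domain qualification, but the underlying argument is identical.
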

\begin{proof}
With  the following three lemmas,  applying Theorem 3.2 and Theorem 3.3 in \cite{Bauschke2008} yields the strict convergence of the iterates  and the global optimality of the limit point. \end{proof}
\begin{lemma} \label{proj_l1}
Let  $\mathcal P^1(\bsb{\Phi})$ be the optimal solution to
 \begin{equation}
\min_{\bsbB} \frac{1}{2} \|\bsbB-\bsb{\Phi}\|_F^2 + \| \bsb{\Lambda} \circ \bsbB\|_1.
\end{equation}
 Then  $\mathcal P^1(\bsb{\Phi}; \bsb{\Lambda}) = \Theta_S(\bsb{\Phi}; \bsb{\Lambda})$ where $\Theta_S$, applied componentwise on $\bsb{\Phi}$, is the soft-thresholding rule given in Section \ref{sec:univariate}. \end{lemma}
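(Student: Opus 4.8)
The plan is to exploit the complete separability of the objective across the entries of $\bsbB$. Writing $\bsbPhi=[\phi_{kk'}]$ and $\bsbLamb=[\lambda_{kk'}]$ with each $\lambda_{kk'}\ge 0$ (with $+\infty$ allowed to encode forbidden links), the Frobenius term and the entrywise weighted $\ell_1$ term both decompose entrywise, so the criterion equals $\sum_{k,k'}\big(\tfrac12(b_{kk'}-\phi_{kk'})^2+\lambda_{kk'}|b_{kk'}|\big)$. Hence the matrix problem splits into independent scalar problems $\min_{b\in\mathbb R}\tfrac12(b-\phi)^2+\lambda|b|$, one per entry, and it suffices to solve the scalar case.

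First I would note each scalar problem is strongly convex (the quadratic part is strongly convex, $|b|$ is convex; for $\lambda=+\infty$ the feasible set is $\{0\}$), hence has a unique minimizer, which I would characterize by the first-order optimality condition $0\in b-\phi+\lambda\,\partial|b|$, where $\partial|b|=\{\sgn(b)\}$ for $b\neq 0$ and $[-1,1]$ for $b=0$. A short case analysis — $b>0$ forces $b=\phi-\lambda$ and requires $\phi>\lambda$; $b<0$ forces $b=\phi+\lambda$ and requires $\phi<-\lambda$; $b=0$ requires $|\phi|\le\lambda$ — shows the unique solution is $\sgn(\phi)(|\phi|-\lambda)_+$, i.e.\ exactly $\Theta_S(\phi;\lambda)$ as defined in Section~\ref{sec:univariate}; the degenerate weights $\lambda=0$ (solution $\phi$) and $\lambda=+\infty$ (solution $0$) are covered consistently. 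Reassembling the componentwise solutions gives $\mathcal P^1(\bsbPhi;\bsbLamb)=\Theta_S(\bsbPhi;\bsbLamb)$.

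A shorter route is simply to invoke the penalty--thresholding correspondence already recorded in Section~\ref{sec:univariate}: the $\ell_1$ penalty \eqref{eq:l1-pen} satisfies the construction \eqref{eq:construction} with $\Theta=\Theta_S$ and $q\equiv 0$, so Lemma~1 of \cite{SheGLMTISP} applied coordinatewise identifies the proximal map of $\|\bsbLamb\circ\,\cdot\,\|_1$ with componentwise soft-thresholding at thresholds $\lambda_{kk'}$. Either way there is no real obstacle here; the only points requiring mild care are the separability bookkeeping (nothing couples distinct entries) and the conventions for the degenerate weights $\lambda_{kk'}\in\{0,+\infty\}$, which the case analysis handles automatically.
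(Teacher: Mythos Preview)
Your proposal is correct, and the ``shorter route'' you describe---invoking Lemma~1 of \cite{SheGLMTISP} coordinatewise via the penalty--thresholding correspondence \eqref{eq:construction}---is precisely the paper's own one-line proof. The explicit subdifferential case analysis you give first is a harmless (and arguably clearer) elaboration of the same idea.
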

\begin{proof} Apply Lemma 1 in \cite{SheGLMTISP}.
\end{proof}
\begin{lemma} \label{proj_lin}
The optimal solution to
 \begin{equation}
\min_{\bsbB, \bsbC} \frac{1}{2} \|[\bsbB \ \bsbC]-[\bsb{\Phi}_{\bsbB} \ \bsb{\Phi}_{\bsbC}]\|_F^2 \mbox{ s.t. } \bsbC = (\bsbL \bsbB^\tran + \bsbB \bsbL) /2.
\end{equation}
 is given by $\mathcal P^2(\bsb{\Phi}_{\bsbB}, \bsb{\Phi}_{\bsbC}; \bsbL)=[\bsbB^o, \bsbC^o]$ with $\bsbB^o=[b_{i,j}^o]$,  $b_{i,j}^o=\psi_{i,j} \frac{2+l_i^2}{2+l_i^2 + l_j^2} - \psi_{j,i} \frac{l_i l_j }{2+l_i^2 + l_j^2}$, and $\bsbC^o=(\bsbL {\bsbB^{o}}^{\tran} + {\bsbB^o}^{\tran} \bsbL) /2$, where $\bsb{\Psi}=[\psi_{i,j}]=\bsb{\Phi}_{\bsbB} + (\bsb{\Phi}_{\bsbC} + \bsb{\Phi}_{\bsbC}^\tran)\bsbL/2$.
\end{lemma}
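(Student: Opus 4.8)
The plan is to eliminate $\bsbC$ from the problem using the equality constraint, reducing it to an unconstrained strictly convex quadratic minimization over $\bsbB$. Substituting $\bsbC = (\bsbL\bsbB^\tran + \bsbB\bsbL)/2$ into the objective gives
\[
h(\bsbB) := \tfrac12\|\bsbB - \bsb{\Phi}_{\bsbB}\|_F^2 + \tfrac12\bigl\|(\bsbL\bsbB^\tran + \bsbB\bsbL)/2 - \bsb{\Phi}_{\bsbC}\bigr\|_F^2 ,
\]
a sum of squared Frobenius norms of affine functions of $\bsbB$, hence convex, and strictly convex because of the first term. Consequently any stationary point of $h$ is its unique global minimizer, and pairing it with the $\bsbC$ recovered from the constraint yields the unique optimal $(\bsbB^o,\bsbC^o)$. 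So it suffices to solve $\nabla h(\bsbB) = \bsb{0}$ and verify the solution has the stated form.

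Next I would compute the gradient. Writing $\bsb{M} := (\bsbL\bsbB^\tran + \bsbB\bsbL)/2 - \bsb{\Phi}_{\bsbC}$ and using that $\bsbL$ is diagonal, hence symmetric, a directional-derivative calculation with the identities $\mathrm{tr}(\bsbX\bsbE^\tran) = \langle\bsbX,\bsbE\rangle$ and $\mathrm{tr}(\bsbX\bsbE) = \langle\bsbX^\tran,\bsbE\rangle$ gives
\[
\nabla h(\bsbB) = \bsbB - \bsb{\Phi}_{\bsbB} + \tfrac12(\bsb{M} + \bsb{M}^\tran)\bsbL .
\]
Since $\bsb{M} + \bsb{M}^\tran = (\bsbL\bsbB^\tran + \bsbB\bsbL) - (\bsb{\Phi}_{\bsbC} + \bsb{\Phi}_{\bsbC}^\tran)$, setting $\nabla h(\bsbB) = \bsb{0}$ and rearranging yields the linear equation
\[
\bsbB + \tfrac12\bsbL\bsbB^\tran\bsbL + \tfrac12\bsbB\bsbL^2 = \bsb{\Phi}_{\bsbB} + (\bsb{\Phi}_{\bsbC} + \bsb{\Phi}_{\bsbC}^\tran)\bsbL/2 = \bsb{\Psi},
\]
which identifies the right-hand side with the matrix $\bsb{\Psi}$ of the statement.

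Then I would exploit the diagonality of $\bsbL = \mbox{diag}\{l_i\}$: the $(i,j)$ entry of the left-hand side is $b_{i,j} + \tfrac12 l_i l_j b_{j,i} + \tfrac12 l_j^2 b_{i,j}$, so for each pair $i\ne j$ the equation decouples into the $2\times2$ system
\[
\begin{pmatrix} 2 + l_j^2 & l_i l_j \\ l_i l_j & 2 + l_i^2 \end{pmatrix}\!\begin{pmatrix} b_{i,j} \\ b_{j,i}\end{pmatrix} = \begin{pmatrix} 2\psi_{i,j} \\ 2\psi_{j,i}\end{pmatrix},
\]
while $i = j$ gives the scalar equation $(2 + 2l_i^2) b_{i,i} = 2\psi_{i,i}$. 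The determinant of the $2\times2$ coefficient matrix equals $(2+l_i^2)(2+l_j^2) - l_i^2 l_j^2 = 2(2 + l_i^2 + l_j^2) > 0$, so each system is uniquely solvable, and Cramer's rule gives $b_{i,j}^o = \psi_{i,j}\frac{2+l_i^2}{2+l_i^2+l_j^2} - \psi_{j,i}\frac{l_i l_j}{2+l_i^2+l_j^2}$, the diagonal formula being the special case $i=j$. Finally $\bsbC^o = (\bsbL(\bsbB^o)^\tran + \bsbB^o\bsbL)/2$ is read off from the constraint, completing the identification of $\mathcal P^2$.

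The only mildly delicate point is the gradient computation, specifically keeping the transposes straight when differentiating $\|(\bsbL\bsbB^\tran + \bsbB\bsbL)/2 - \bsb{\Phi}_{\bsbC}\|_F^2$ with respect to $\bsbB$; once the stationarity equation is in hand, the diagonal structure of $\bsbL$ turns what could be an unwieldy Sylvester-type equation into a family of independent $2\times2$ (and $1\times1$) systems, and the rest is routine bookkeeping.
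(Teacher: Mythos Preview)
Your proof is correct and follows exactly the same route as the paper: substitute the linear constraint to obtain an unconstrained quadratic in $\bsbB$, compute the gradient, and set it to zero to obtain the stationarity equation $\bsbB + \tfrac12\bsbL\bsbB^\tran\bsbL + \tfrac12\bsbB\bsbL^2 = \bsb{\Psi}$. The paper's proof stops there, merely stating that ``the optimal $\bsbB$ and $\bsbC$ can be evaluated accordingly,'' whereas you carry out the remaining step explicitly by exploiting the diagonality of $\bsbL$ to decouple the system into $2\times2$ blocks and solve via Cramer's rule---so your argument is in fact more complete than the paper's.
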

\begin{proof}
Let $f(\bsbB) =  \|\bsbB- \bsb{\Phi}_{\bsbB}\|_F^2/2 + \|  (\bsbL \bsbB^\tran + \bsbB \bsbL) /2-\bsb{\Phi}_{\bsbC}\|_F^2/2$. It is not difficult to obtain the gradient (details omitted): $\bsbB - \bsb{\Phi}_{\bsbB} + (\bsbL \bsbB^\tran \bsbL + \bsbB \bsbL^2)/2 - (\bsb{\Phi}_{\bsbC} + \bsb{\Phi}_{\bsbC}^\tran)\bsbL/2$. The optimal $\bsbB$ and $\bsbC$ can be evaluated accordingly.
\end{proof}

\begin{lemma} \label{proj_spec}
Let $\mathcal P^3(\bsb{\Phi}; \bsbD)$ be the optimal solution to
 \begin{equation}
\min_{\bsbC} \frac{1}{2} \|\bsbC-\bsb{\Phi}\|_F^2 \mbox{ s.t. } \bsbC \mbox{ is symmetric and satisfies }\bsbC \preceq 4\bsb{D}. \label{optCOnly}
\end{equation}
 Then it is given by $\bsb{U}\mbox{diag}\{\min(s_i,0)\}\bsb{U}^{\tran}+4\bsb{D}$, where $\bsb{S}=\mbox{diag}\{s_1, \cdots, s_n\}$ and $\bsbU$ are  from  the spectral decomposition of    $(\bsb{\Phi}+\bsb{\Phi}^\tran)/{2}-4\bsb{D}=\bsb{U}\bsb{S}\bsb{U}^{\tran}$.\end{lemma}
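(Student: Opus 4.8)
The plan is to recognize $\mathcal P^3(\bsb{\Phi};\bsbD)$ as a rigid translate of the Euclidean projection onto the cone of negative semidefinite matrices, a classical operation with a known spectral closed form. First I would reduce to a symmetric target: writing $\bsb{\Phi}=\bsb{\Phi}_{\mathrm{s}}+\bsb{\Phi}_{\mathrm{a}}$ with $\bsb{\Phi}_{\mathrm{s}}=(\bsb{\Phi}+\bsb{\Phi}^{\tran})/2$ and $\bsb{\Phi}_{\mathrm{a}}=(\bsb{\Phi}-\bsb{\Phi}^{\tran})/2$, and using that the Frobenius inner product of any symmetric matrix with a skew-symmetric one vanishes, one gets for any feasible (hence symmetric) $\bsbC$ that $\|\bsbC-\bsb{\Phi}\|_F^2=\|\bsbC-\bsb{\Phi}_{\mathrm{s}}\|_F^2+\|\bsb{\Phi}_{\mathrm{a}}\|_F^2$. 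Hence \eqref{optCOnly} is equivalent to projecting $\bsb{\Phi}_{\mathrm{s}}$ onto $\{\bsbC=\bsbC^{\tran}:\bsbC\preceq 4\bsbD\}$, a closed convex set, so the projection exists and is unique.

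Next I would translate and diagonalize. Setting $\bsbE=\bsbC-4\bsbD$ turns the problem into $\min\{\tfrac12\|\bsbE-\bsb{\Psi}\|_F^2:\bsbE=\bsbE^{\tran},\ \bsbE\preceq\bsb{0}\}$ with $\bsb{\Psi}:=(\bsb{\Phi}+\bsb{\Phi}^{\tran})/2-4\bsbD$, precisely the matrix whose eigendecomposition $\bsb{\Psi}=\bsbU\bsbS\bsbU^{\tran}$ is named in the statement. By orthogonal invariance of $\|\cdot\|_F$ and the change of variables $\bsbE'=\bsbU^{\tran}\bsbE\bsbU$ (which preserves $\bsbE'\preceq\bsb{0}$), it suffices to minimize $\|\bsbE'-\bsbS\|_F^2$ over symmetric negative semidefinite $\bsbE'$. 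The crux is the estimate $\|\bsbE'\|_F^2\ge\sum_i (E'_{ii})^2$: combined with $\langle\bsbE',\bsbS\rangle=\sum_i E'_{ii}s_i$ it gives $\|\bsbE'-\bsbS\|_F^2\ge\sum_i(E'_{ii}-s_i)^2$, and since $\bsbE'\preceq\bsb{0}$ forces $E'_{ii}\le 0$, each summand is at least $(\min(s_i,0)-s_i)^2$. The diagonal matrix $\mathrm{diag}\{\min(s_i,0)\}$ is negative semidefinite and attains this lower bound, and tracking the equality cases (off-diagonals vanish, diagonals equal $\min(s_i,0)$) shows it is the unique minimizer. Undoing the substitutions yields the formula $\bsbC=\bsbU\,\mathrm{diag}\{\min(s_i,0)\}\,\bsbU^{\tran}+4\bsbD$, as claimed.

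As an alternative to this diagonal-reduction estimate, one may instead invoke the Moreau decomposition $\bsb{\Psi}=\bsb{\Psi}_{+}+\bsb{\Psi}_{-}$ into its positive and negative semidefinite spectral parts with $\langle\bsb{\Psi}_{+},\bsb{\Psi}_{-}\rangle=0$, which at once identifies $\bsb{\Psi}_{-}=\bsbU\,\mathrm{diag}\{\min(s_i,0)\}\,\bsbU^{\tran}$ as the projection of $\bsb{\Psi}$ onto $\{\bsbE\preceq\bsb{0}\}$. The only step that warrants genuine care---and that a terse write-up tends to skip---is justifying that the optimal $\bsbE'$ may be taken diagonal, i.e.\ that off-diagonal entries can be discarded without increasing the objective; the bound $\|\bsbE'\|_F^2\ge\sum_i(E'_{ii})^2$ together with the sign constraint on the diagonal forced by $\bsbE'\preceq\bsb{0}$ handles this cleanly. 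Everything else---the symmetrization, the rigid shift by $4\bsbD$, and orthogonal invariance of the Frobenius norm---is exact bookkeeping.
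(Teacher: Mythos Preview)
Your proof is correct and follows the same overall route as the paper: first replace $\bsb{\Phi}$ by its symmetric part $(\bsb{\Phi}+\bsb{\Phi}^{\tran})/2$ (the paper does this entrywise, you do it via Frobenius orthogonality of symmetric and skew-symmetric matrices---same content), then shift by $4\bsbD$ to reduce to projecting onto the cone $\{\bsbE=\bsbE^{\tran}:\bsbE\preceq\bsb{0}\}$. The only difference is in how the final spectral projection is justified: the paper simply invokes von Neumann's trace inequality, whereas you give a self-contained elementary argument (rotate into the eigenbasis of $\bsb{\Psi}$, use $\|\bsbE'\|_F^2\ge\sum_i(E'_{ii})^2$ and the diagonal sign constraint) and also note the Moreau-decomposition alternative. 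Your argument is a bit more explicit about why the optimizer is diagonal in that basis, which is exactly the point von Neumann's inequality encapsulates; either justification is standard and they are essentially equivalent here.
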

\begin{proof}
Because  $\bsbC$ is symmetric (but   $\bsb{\Phi}$ may not be), we have
\begin{equation}\begin{aligned}
&\|\bsbC-\bsb{\Phi}\|_F^2 \\
= &\sum_{1\leq i<j \leq p}[(c_{ij}-\phi_{ij})^2+(c_{ij}-\phi_{ji})^2] + \sum_{i=1}^p(c_{ii}-\phi_{ii})^2
\\ = & \sum_{1\leq i<j \leq p}2(c_{ij}-\frac{\phi_{ij}+\phi_{ji}}{2})^2 + \sum_{i=1}^p(c_{ii}-\phi_{ii})^2 + const(\bsb{\Phi}) \\
=& \,\|\bsbC-\frac{\bsb{\Phi}+\bsb{\Phi}^\tran}{2}\|_F^2 + const(\bsb{\Phi}),
\end{aligned}\end{equation}
where $const(\bsb{\Phi})$ is a term that does not depend on $\bsbC$.
Therefore, problem \eqref{optCOnly} is equivalent to
 \begin{equation}\begin{gathered}
\min_{\bsbC} \frac{1}{2} \|\bsbC-\frac{\bsb{\Phi}+\bsb{\Phi}^\tran}{2}\|_F^2,
\mbox{s.t. } \bsbC-4\bsb{D} \preceq \bsb{0}.
\end{gathered}\end{equation}
The optimality of $\mathcal P^3(\bsb{\Phi}; \bsbD)$ can then be argued by  von Neumann's trace inequality \cite{vonNeumann1937,SheTISPMat}. 
\end{proof}


\bibliographystyle{IEEEtranN}
\bibliography{sigmoid}

\end{document}